\documentclass[10pt,final,journal,twocolumn]{IEEEtran}

\usepackage{amsmath,amssymb,color,amsmath, graphicx, float, caption, subcaption, mathrsfs,color} \usepackage[ruled]{algorithm2e} 
\usepackage{bm}
\usepackage{epstopdf }
\usepackage[font={small}]{caption}
\usepackage{cite}
\usepackage{amsthm}
\usepackage{url}


\theoremstyle{plain}
\newtheorem{theorem}{\textbf{Theorem}}

\newtheorem{lemma}[theorem]{\textbf{Lemma}}

\newtheorem{proposition}[theorem]{\textbf{Proposition}}
\newtheorem{condition}{\textbf{Condition}}


\def\H{\bm{H}}
\def\I{\bm{I}}

\def\P{\bm{P}}
\def\Q{\bm{Q}}


\def\e{\bm{e}}

\def\g{\bm{g}}
\def\h{\bm{h}}

\def\u{\bm{u}}
\def\v{\bm{v}}

\def\x{\bm{x}}
\def\y{\bm{y}}
\def\z{\bm{z}}





\def\argmin#1{\underset{#1}{\textrm{argmin}}}

\def\minim#1{\underset{#1}{\textrm{min}}}

\def\0{\mathbf{0}}
\def\1{\mathbf{1}}

\newcommand{\putSpace}[1]{\textcolor{white}{#1}}

\SetKwInput{kwEvaluate}{Evaluate}
\SetKwInput{kwSort}{Sort}
\SetKwInput{kwInput}{Input}
\SetKwInput{kwOutput}{Output}
\SetKwInput{kwInitialize}{Initialize}
\SetKwInput{kwParameters}{Params.}
\SetKwInput{kwDefaults}{Default init.}

\begin{document}

\title{
Image Restoration by Iterative Denoising and Backward Projections \\ }
\author{
    Tom~Tirer,
    and Raja~Giryes \\

    \thanks{
(c) 2018 IEEE. Personal use of this material is permitted. However, permission to use this material for any other purposes must be obtained from the IEEE by sending a request to pubs-permissions@ieee.org.

The authors are with the School of Electrical Engineering, Tel Aviv University, Tel Aviv 69978, Israel. (email: tirer.tom@gmail.com, raja@tauex.tau.ac.il)}
          }
\markboth{To appear in IEEE Transactions on Image Processing}{Tirer at al.
          : Performance Analysis of High Resolution Direct Position Determination Method}
\maketitle

\begin{abstract}

Inverse problems appear in many applications, such as image deblurring and inpainting. The common approach to address them is to design a specific algorithm for each problem. The Plug-and-Play (P\&P) framework, which has been recently introduced, allows solving general inverse problems by leveraging the impressive capabilities of existing denoising algorithms. While this fresh strategy has found many applications, a burdensome parameter tuning is often required in order to obtain high-quality results. In this work, we propose an alternative method for solving inverse problems using off-the-shelf denoisers, which requires less parameter tuning. 
First, we transform a typical cost function, composed of fidelity and prior terms, into a closely related, novel optimization problem.
Then, we propose an efficient minimization scheme with a plug-and-play property, i.e., the prior term is handled solely by a denoising operation.
Finally, we present an automatic tuning mechanism to set the method's parameters. 
We provide a theoretical analysis of the method, and empirically demonstrate its competitiveness with task-specific techniques and 
the P\&P approach for image inpainting and deblurring.

\end{abstract}

\begin{IEEEkeywords}
Plug-and-play, inverse problems, image restoration, image denoising, image deblurring, image inpainting, denoising neural network
\end{IEEEkeywords}

\section{Introduction}

We consider the reconstruction of an image from its degraded version, which may be noisy, blurred, downsampled, or all together. This general problem has many important applications, such as medical imaging, surveillance, entertainment, and more.
Traditionally, the design of task-specific algorithms has been the ruling approach. Many works specifically considered image denoising \cite{dabov2007image, elad2006image, buades2005review}, deblurring \cite{delbracio2015burst, guerrero2008image, danielyan2012bm3d}, inpainting \cite{bertalmio2000image, criminisi2004region, elad2005simultaneous}, super-resolution \cite{kim2016accurate, romano2017raisr}, etc. 

Recently, a new approach attracts much interest. This approach suggests leveraging the impressive capabilities of existing denoising algorithms for solving other tasks that can be formulated as an inverse problem. The concept is introduced in the Plug-and-Play (P\&P) method \cite{venkatakrishnan2013plug}, which presents an elegant way to decouple the measurement model and the image prior, such that the latter is handled solely by a denoising operation. Thus, it is not required to explicitly specify the prior, since it is implicitly defined through the choice of the denoiser. We note that several earlier works \cite{beck2009fast, afonso2010fast} have solved linear inverse problems iteratively, where a denoising sub-problem is solved in each iteration. Yet, these methods assume an explicit prior term, while \cite{venkatakrishnan2013plug} demonstrates the advantages of using well-known denoisers, even when it is not clear how to formulate their associated priors.

The P\&P method has already found many applications, e.g. bright field electron tomography \cite{sreehari2016plug}, Poisson
denoising \cite{rond2016poisson}, and postprocessing of compressed images \cite{dar2016postprocessing}. It also inspired new related techniques \cite{meinhardt2017learning, liu2017nonlocal, romano2016little, teodoro2016image, zhang2017learning, bigdeli2017image}. However, it has been noticed that the P\&P often requires a burdensome parameter tuning in order to obtain high quality results \cite{romano2016little, chan2017plug}. Moreover, since it is an iterative method, sometimes a large number of iterations (consisting of denoising operations) is required.

In this work, we propose a simple iterative method for solving linear inverse problems using denoising algorithms, which provides an alternative to P\&P. 
Our strategy requires less parameter tuning, and often less iterations than P\&P. Its recovery performance is competitive with task-specific algorithms and with the P\&P approach. 
To derive our algorithm, we first transform a typical cost function, composed of fidelity and prior terms, into a closely related, novel optimization problem.
Then, we propose an efficient minimization scheme with the desired plug-and-play property for the prior term. 
Finally, we provide an automatic tuning mechanism to set the method's parameters. 
We demonstrate the advantages of the new technique on inpainting and deblurring problems. 

Perhaps the most appealing property of the proposed strategy is its minimal parameter tuning. Specifically, for the noisy inpainting problem, our method has a single parameter that can be just set to zero, and for the deblurring problem we suggest an automatic parameter tuning scheme that can be employed. Regarding the latter, we note that there are other works that consider automatic parameter selection in inverse problems. However, in these works the prior term is restricted to certain types of penalty functions, e.g. Tikhonov regularization \cite{golub1979generalized, hansen1993use, haber2000gcv}, smoothed versions of the $\ell_p$ ($1<p<2$) norm \cite{eldar2009generalized, giryes2011projected}, or even more general convex functions \cite{ramani2012iterative, ramani2012regularization}. As far as we know, the literature does not offer similar tuning mechanism for sophisticated non-convex priors (e.g. BM3D \cite{dabov2007image}), all the more so for learned priors (e.g. IRCNN \cite{zhang2017learning}). In contrast, the tuning considerations of our method do not depend on the prior, which is arbitrarily specified by the chosen denoiser.

The paper is organized as follows. In Section \ref{sec_background} we present the problem formulation and the P\&P approach.
In Section \ref{sec_IDBP_algorithm} we present the proposed algorithm, provide a practical way to tune its parameter, and discuss its usage for inpainting and deblurring problems.
Section \ref{sec_analysis} includes mathematical analysis of the algorithm.
In Section \ref{sec_Experiments} the proposed method is empirically examined for the inpainting and deblurring problems. Section \ref{sec_conclusion} concludes the paper.

\section{Background}
\label{sec_background}

\subsection{Problem formulation}
\label{sec_problem_formulation}

The problem of image restoration can be generally formulated by
\begin{align}
\label{Eq_general_model}
\y = \H\x + \e,
\end{align}
where $\x \in \Bbb R^n$ represents the unknown original image, $\y \in \Bbb R^m$ represents the observations, $\H$ is an $m \times n$ degradation matrix and $\e \in \Bbb R^m$ is a vector of independent and identically distributed Gaussian random variables with zero mean and standard deviation of $\sigma_e$. The model in (\ref{Eq_general_model}) can represent different image restoration problems; for example: image denoising when $\H$ is the $n \times n$ identity matrix $\I_n$, image inpainting when $\H$ is a selection of $m$ rows of $\I_n$, and image deblurring when $\H$ is a blurring operator. 

In all of these cases, a prior image model $s(\x)$ is required in order to successfully estimate $\x$ from the observations $\y$.
Specifically, note that $\H$ is ill-conditioned in the case of image deblurring, thus, in practice it can be approximated by a rank-deficient matrix, or alternatively by a full rank $m \times n$ matrix ($m<n$). Therefore, for a unified formulation of inpainting and deblurring problems, which are the test cases of this paper, we assume $m<n$.

Almost any approach for recovering $\x$ involves formulating a cost function, composed of fidelity and penalty terms, which is minimized by the desired solution. The fidelity term ensures that the solution agrees with the measurements, and is often derived from the negative log-likelihood function. The penalty term regularizes the optimization problem through the prior image model $s(\x)$.
Hence, the typical cost function is
\begin{align}
\label{Eq_cost_func1}
f(\tilde{\x}) = \frac{1}{2\sigma_e^2} \| \y-\H\tilde{\x} \|_2^2 + s(\tilde{\x}),
\end{align}
where $\tilde{\x}$ is the optimization variable, and $\| \cdot \|_2$ stands for the Euclidean norm.

\subsection{Plug and Play approach}
\label{sec_PnP_approach}

Instead of devising a separate algorithm to solve $\minim{\tilde{\x}} f(\tilde{\x})$ for each type of matrix $\H$, a general recovery strategy has been proposed in \cite{venkatakrishnan2013plug}, denoted as the Plug-and-Play (P\&P). For completeness, we briefly describe this technique. 

Using variable splitting, the P\&P method restates the minimization problem as
\begin{align}
\label{Eq_pnp_cost_func2}
\minim{\tilde{\x}, \tilde{\v}} \,\,\,\, \ell(\tilde{\x}) + \beta s(\tilde{\v}) \,\,\,\, \textrm{s.t.} \,\,\,\, \tilde{\x}=\tilde{\v},
\end{align}
where $\ell(\tilde{\x}) \triangleq \frac{1}{2\sigma_e^2} \| \y-\H\tilde{\x} \|_2^2$ is the fidelity term in (\ref{Eq_cost_func1}), and $\beta$ is a positive parameter that adds flexibility to the cost function.
This problem can be solved using ADMM \cite{boyd2011distributed} by constructing an augmented Lagrangian, which is given by
\begin{align}
\label{Eq_pnp_AL}
L_\lambda &= \ell(\tilde{\x}) + \beta s(\tilde{\v}) + \u^T(\tilde{\x}-\tilde{\v}) + \frac{\lambda}{2} \| \tilde{\x}-\tilde{\v} \|_2^2 \nonumber \\
&=  \ell(\tilde{\x}) + \beta s(\tilde{\v}) + \frac{\lambda}{2} \| \tilde{\x}-\tilde{\v}+\tilde{\u} \|_2^2 - \frac{\lambda}{2} \|  \tilde{\u} \|_2^2,
\end{align}
where $\u$ is the dual variable, $\tilde{\u} \triangleq \frac{1}{\lambda}\u$ is the scaled dual variable, and $\lambda$ is the ADMM penalty parameter. The ADMM algorithm consists of iterating until convergence over the following three steps
\begin{align}
\label{Eq_pnp_ADMM}
\check{\x}_k &= \argmin{\tilde{\x}} \, L_\lambda(\tilde{\x},\check{\v}_{k-1},\check{\u}_{k-1}), \nonumber \\
\check{\v}_k &= \argmin{\tilde{\v}} \, L_\lambda(\check{\x}_{k},\tilde{\v},\check{\u}_{k-1}), \nonumber \\
\check{\u}_k &= \check{\u}_{k-1} + (\check{\x}_{k}  - \check{\v}_{k} ).
\end{align}
By plugging (\ref{Eq_pnp_AL}) in (\ref{Eq_pnp_ADMM}) we have
\begin{align}
\label{Eq_pnp_ADMM2}
\check{\x}_k &= \argmin{\tilde{\x}} \, \ell(\tilde{\x}) + \frac{\lambda}{2} \| \tilde{\x}-(\check{\v}_{k-1}-\check{\u}_{k-1}) \|_2^2, \nonumber \\
\check{\v}_k &= \argmin{\tilde{\v}} \, \frac{\lambda}{2\beta} \| (\check{\x}_k+\check{\u}_{k-1})-\tilde{\v} \|_2^2 + s(\tilde{\v}), \nonumber \\
\check{\u}_k &= \check{\u}_{k-1} + (\check{\x}_{k}  - \check{\v}_{k} ).
\end{align}
Note that the first step in (\ref{Eq_pnp_ADMM2}) is just solving a least squares (LS) problem and the third step is a simple update. The second step is more interesting. It describes obtaining $\check{\v}_k$ using a denoiser for white Gaussian noise of variance $\sigma^2=\beta/\lambda$, applied on the image $\check{\x}_k+\check{\u}_{k-1}$. 
This can be written compactly as $\check{\v}_k=\mathcal{D}(\check{\x}_k+\check{\u}_{k-1};\sigma)$, where $\mathcal{D}(\cdot;\sigma)$ is a denoising operator. Since general denoising algorithms can be used to implement the operator $\mathcal{D}(\cdot;\sigma)$,
the P\&P method does not require knowing or explicitly specifying the prior function $s(\x)$. Instead, $s(\x)$ is implicitly defined through the choice of $\mathcal{D}(\cdot;\sigma)$.
The obtained P\&P algorithm is presented in Algorithm \ref{PnP_alg}.

\begin{algorithm}
\caption{Plug and Play (P\&P)}
\vspace{2mm}
\kwInput{$\H, \y, \sigma_e$, denoising operator $\mathcal{D}(\cdot;\sigma)$, stopping criterion. $\y = \H\x+\e$, such that $\e \sim \mathcal{N}(\0,\sigma_e^2\I_m)$ and $\x$ is an unknown signal whose prior model is specified by $\mathcal{D}(\cdot;\sigma)$.}
\kwOutput{$\hat{\x}$ an estimate for $\x$.}
\kwInitialize{$\check{\v}_0=$ some initialization, $\check{\u}_0=\0$, $k=0$, some initialization for $\beta$ and $\lambda$.}
\While{stopping criterion not met}{
    $k = k+1$\;
    $\check{\x}_k = (\H^T\H+\lambda\sigma_e^2\I_n)^{-1} \times \hspace{80pt} \putSpace{\,\,\,\,\,\,\,\,\,\,\,\,\,\,\,\,\,\,\,\,\,\,\,\,\,\,\,} (\H^T\y + \lambda\sigma_e^2(\check{\v}_{k-1}-\check{\u}_{k-1}))$\;
    $\check{\v}_k = \mathcal{D}(\check{\x}_k+\check{\u}_{k-1};\sqrt{\beta/\lambda})$\;
    $\check{\u}_k = \check{\u}_{k-1} + (\check{\x}_{k}  - \check{\v}_{k})$\;
}
$\hat{\x} = \check{\x}_k$\;
\label{PnP_alg}
\end{algorithm}

From ADMM theory, {\em global convergence} (i.e. iterations approach feasibility and objective reaches its optimal value) is ensured if $\ell(\x)$ and $s(\x)$ are convex, closed, proper, and the unaugmented Lagrangian has a saddle point \cite{boyd2011distributed}. Yet, the immediate implication of this result for P\&P is limited, as the prior functions associated with popular off-the-shelf denoisers are non-convex or even unclear.
Avoiding the specification of $s(\x)$, global convergence of P\&P is proved in \cite{sreehari2016plug} for a denoiser $\mathcal{D}(\cdot;\sigma)$ that has a symmetric gradient and is non-expansive. However, the latter is difficult to be proved, and well-known denoisers such as BM3D \cite{dabov2007image}, K-SVD \cite{elad2006image}, and standard NLM \cite{buades2005review}, lead to good results despite violating these conditions.
Another type of convergence is {\em fixed point convergence}, which guarantees that an iterative algorithm asymptotically enters a steady state. A modified version of P\&P, where the ADMM parameter $\lambda$ increases between iterations, is guaranteed to have such a convergence under some mild conditions on the denoiser \cite{chan2017plug}.

The P\&P method is not free of drawbacks. Its main difficulties are the large number of iterations, which is often required by the P\&P to converge to a good solution, and the setting of the design parameters $\beta$ and $\lambda$, which is not always clear and strongly affects the performance.

\section{The Proposed Algorithm}
\label{sec_IDBP_algorithm}

In this work we take another strategy for solving inverse problems using denoising algorithms.
We start with formulating the cost function (\ref{Eq_cost_func1}) in somewhat strange but equivalent way
\begin{align}
\label{Eq_cost_func2}
f(\tilde{\x}) & = \frac{1}{2\sigma_e^2} \| \y-\H\tilde{\x} \|_2^2 + s(\tilde{\x}) \nonumber \\
&= \frac{1}{2\sigma_e^2} \| \H( \H^\dagger\y-\tilde{\x}) \|_2^2 + s(\tilde{\x})  \nonumber \\
&=\frac{1}{2\sigma_e^2} \| \H^\dagger\y-\tilde{\x} \|_{\H^T\H}^2 + s(\tilde{\x}),
\end{align}
where
\begin{align}
\label{Eq_cost_func2_def1}
\H^\dagger & \triangleq \H^T(\H\H^T)^{-1}  \\
\label{Eq_cost_func2_def2}
\| \u \|_{\H^T\H}^2 & \triangleq  \u^T\H^T\H\u.
\end{align}
Note that $\H^\dagger$ is the pseudoinverse of the full row rank matrix $\H$, and $\| \u \|_{\H^T\H}$ is a seminorm rather than a real norm, since $\H^T\H$ is not a positive definite matrix in our case. Moreover, as mentioned above, since the null space of $\H^T\H$ is nontrivial, the prior $s(\tilde{\x})$ is essential in order to obtain a meaningful solution.

The optimization problem $\minim{\tilde{\x}} f(\tilde{\x})$ can be equivalently written as
\begin{align}
\label{Eq_cost_func3}
\minim{\tilde{\x}, \tilde{\y}} \,\,\, \frac{1}{2\sigma_e^2} \| \tilde{\y}-\tilde{\x} \|_{\H^T\H}^2 + s(\tilde{\x}) \,\,\,\, \textrm{s.t.} \,\,\,\, \tilde{\y}=\H^\dagger \y. 
\end{align}
Note that due to the degenerate constraint, the solution for $\tilde{\y}$ is trivial $\tilde{\y}=\H^\dagger \y$.

Now, we make two major modifications to the above optimization problem. The basic idea is to loosen the variable $\tilde{\y}$ in a restricted manner, with the purpose of facilitating the estimation of $\x$.
First, we give some degrees of freedom to $\tilde{\y}$ by using the constraint $\H\tilde{\y}=\y$ instead of $\tilde{\y}=\H^\dagger \y$. 
Note, though, that components of $\tilde{\y}$ in the null space of $\H$ are ignored by the current fidelity term and the new constraint, because in both of them $\tilde{\y}$ is multiplied by $\H$.
Since these components are not controlled, they may strongly disagree with the prior $s(\tilde{\x})$ and complicate the optimization with respect to $\tilde{\x}$.
Therefore, to tackle this issue, we replace the seminorm $\frac{1}{\sigma_e^2} \| \tilde{\y}-\tilde{\x} \|_{\H^T\H}^2$ in the fidelity term with the Euclidean norm $\frac{1}{(\sigma_e+\delta)^2} \| \tilde{\y}-\tilde{\x} \|_2^2$, where $\delta$ is a design parameter. 
This leads to the following optimization problem
\begin{align}
\label{Eq_cost_func_our}
\minim{\tilde{\x}, \tilde{\y}} \,\,\, \frac{1}{2(\sigma_e+\delta)^2} \| \tilde{\y}-\tilde{\x} \|_2^2 + s(\tilde{\x}) \,\,\,\, \textrm{s.t.} \,\,\,\, \H\tilde{\y}= \y. 
\end{align}

Note that $\delta$ introduces a tradeoff. On the one hand, exaggerated value of $\delta$ should be avoided, as it may over-reduce the effect of the fidelity term. On the other hand, too small value of $(\sigma_e+\delta)^2$ may over-penalize $\tilde{\x}$ unless it is very close to the affine subspace $\{ \H \Bbb R^n = \y \}$. This limits the effective feasible set of $\tilde{\x}$ in problem (\ref{Eq_cost_func_our}), such that it may not include potential solutions of the original problem (\ref{Eq_cost_func3}).
Therefore, we suggest setting the value of $\delta$ as
\begin{align}
\label{Eq_delta_cond}
\delta &= \argmin{\tilde{\delta}} \,\, (\sigma_e+\tilde{\delta})^2 \nonumber \\
&\textrm{s.t.} \,\,\,\,
\frac{1}{\sigma_e^2} \| \H^\dagger\y-\tilde{\x} \|_{\H^T\H}^2 \geq \frac{1}{(\sigma_e+\tilde{\delta)}^2} \| \tilde{\y}-\tilde{\x} \|_2^2  \nonumber \\
&\forall \,\,\, \tilde{\x}, \tilde{\y} \in \mathcal{S}_{(\ref{Eq_cost_func_our})},
\end{align}
where $\mathcal{S}_{(\ref{Eq_cost_func_our})}$ denotes the feasible set of problem (\ref{Eq_cost_func_our}).
Note that the feasibility of $\tilde{\x}$ is dictated by $s(\tilde{\x})$\footnote{Since we make no assumptions on the prior function, it may define an arbitrary feasible set. For example, it can be the characteristic function of some set $\Omega$, i.e. $s(\tilde{\x})=
\begin{cases} 
      0, & \tilde{\x} \in \Omega \\
      +\infty,  & \tilde{\x} \notin \Omega
   \end{cases}
$.} and the feasibility of $\tilde{\y}$ is dictated by the constraint in (\ref{Eq_cost_func_our}).
The problem of obtaining such value for $\delta$ (or an approximation) is discussed in Section \ref{finding_delta}, where a relaxed version of the condition in (\ref{Eq_delta_cond}) is presented.

Assuming that $\delta$ solves (\ref{Eq_delta_cond}), the property that $\frac{1}{\sigma_e^2} \| \H^\dagger\y-\tilde{\x} \|_{\H^T\H}^2 \approx \frac{1}{(\sigma_e+\delta)^2} \| \tilde{\y}-\tilde{\x} \|_2^2$ for feasible $\tilde{\x}$ and $\tilde{\y}$, together with the fact that $\tilde{\y}=\H^\dagger \y$ is one of the solutions of the underdetermined system $\H\tilde{\y}= \y$, prevents increasing the penalty on potential solutions of the original optimization problem (\ref{Eq_cost_func3}). Therefore, roughly speaking, we do not lose solutions when we solve (\ref{Eq_cost_func_our}) instead of (\ref{Eq_cost_func3}).
As a sanity check, observe that if $\H=\I_n$ then the constraint in (\ref{Eq_cost_func_our}) degenerates to $\tilde{\y}=\y$ and the solution to (\ref{Eq_delta_cond}) is $\delta=0$. Therefore, (\ref{Eq_cost_func_our}) reduces to the original image denoising problem.

An additional insight on the new optimization problem is given in Appendix \ref{app:insight}, where we try to explain, from a numerical optimization point of view, why minimizing (\ref{Eq_cost_func_our}) rather than (\ref{Eq_cost_func3}) might even end up with a solution closer to the true image $\x$.

We solve (\ref{Eq_cost_func_our}) using alternating minimization. Iteratively, $\tilde{\x}_k$ is estimated by solving
\begin{align}
\label{Eq_cost_func_our_x}
\tilde{\x}_k = \argmin{\tilde{\x}} \,\, \frac{1}{2(\sigma_e+\delta)^2} \| \tilde{\y}_{k-1}-\tilde{\x} \|_2^2 + s(\tilde{\x}),
\end{align}
and $\tilde{\y}_k$ is estimated by solving
\begin{align}
\label{Eq_cost_func_our_y}
\tilde{\y}_k = \argmin{\tilde{\y}} \,\, \| \tilde{\y}-\tilde{\x}_k \|_2^2  \,\,\,\, \textrm{s.t.} \,\,\,\, \H\tilde{\y}= \y,
\end{align}
which describes a projection of $\tilde{\x}_k$ onto the affine subspace $\{ \H \Bbb R^n = \y \}$, and has a closed-form solution
\begin{align}
\label{Eq_cost_func_our_y_sol}
\tilde{\y}_k = \H^\dagger\y + (\I_n - \H^\dagger\H)\tilde{\x}_k.
\end{align}
Similarly to the P\&P technique, (\ref{Eq_cost_func_our_x}) describes obtaining $\tilde{\x}_k$ using a denoiser for white Gaussian noise of variance $\sigma^2=(\sigma_e+\delta)^2$, applied on the image $\tilde{\y}_{k-1}$,
and can be written compactly as $\tilde{\x}_k=\mathcal{D}(\tilde{\y}_{k-1};\sigma)$, where $\mathcal{D}(\cdot;\sigma)$ is a denoising operator. 
Moreover, as in the case of the P\&P,
the proposed method does not require knowing or explicitly specifying the prior function $s(\x)$. Instead, $s(\x)$ is implicitly defined through the choice of $\mathcal{D}(\cdot;\sigma)$.

The variable $\tilde{\y}_k$ is expected to be closer to the true signal $\x$ than the raw observations $\y$. Thus, our algorithm alternates between estimating the signal and using this estimation to obtain improved measurements (that also comply with the original observations $\y$).
The proposed algorithm, which we call Iterative Denoising and Backward Projections (IDBP), is presented in Algorithm \ref{IDBP_alg}.

\begin{algorithm}
\caption{Iterative Denoising and Backward Projections (IDBP)}
\vspace{2mm}
\kwInput{$\H, \y, \sigma_e$, denoising operator $\mathcal{D}(\cdot;\sigma)$, stopping criterion. $\y = \H\x+\e$, such that $\e \sim \mathcal{N}(\0,\sigma_e^2\I_m)$ and $\x$ is an unknown signal whose prior model is specified by $\mathcal{D}(\cdot;\sigma)$.}
\kwOutput{$\hat{\x}$ an estimate for $\x$.}
\kwInitialize{$\tilde{\y}_0=$ some initialization, $k=0$, $\delta$ approx. satisfying (\ref{Eq_delta_cond}).}
\While{stopping criterion not met}{
    $k = k+1$\;
    $\tilde{\x}_k=\mathcal{D}(\tilde{\y}_{k-1};\sigma_e+\delta)$\;
    $\tilde{\y}_k = \H^\dagger\y + (\I_n - \H^\dagger\H)\tilde{\x}_k$\;
}
$\hat{\x} = \tilde{\x}_k$\;
\label{IDBP_alg}
\end{algorithm}

\subsection{Setting the value of the parameter $\delta$}
\label{finding_delta}

Setting the value of $\delta$ that solves (\ref{Eq_delta_cond}) is required for simple theoretical justification of our method.
However, it is not clear how to obtain such $\delta$ in general.
Therefore, in order to relax the condition in (\ref{Eq_delta_cond}), that should be satisfied by all $\tilde{\x}$ and $\tilde{\y}$ in $\mathcal{S}_{(\ref{Eq_cost_func_our})}$, we can focus only on the sequences $\{\tilde{\x}_k\}$ and $\{\tilde{\y}_k\}$ generated by the proposed alternating minimization process. Then, we can use the following proposition.

\begin{proposition}
\label{proposition1}
Set $\delta=\tilde{\delta}$. If there exist an iteration $k$ of IDBP that violates the following condition
\begin{align}
\label{Eq_delta_cond3}
\frac{1}{\sigma_e^2} \| \y-\H\tilde{\x}_k \|_2^2 \geq \frac{1}{(\sigma_e+\tilde{\delta})^2} \| \H^\dagger ( \y - \H\tilde{\x}_k) \|_2^2,
\end{align}
then $\delta=\tilde{\delta}$ also violates the condition in (\ref{Eq_delta_cond}).
\end{proposition}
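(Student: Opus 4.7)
The plan is to show that (\ref{Eq_delta_cond3}) at iteration $k$ is \emph{literally} the instantiation of the condition in (\ref{Eq_delta_cond}) at the specific pair $(\tilde{\x},\tilde{\y})=(\tilde{\x}_k,\tilde{\y}_k)$ produced by IDBP. Once this identification is made and $(\tilde{\x}_k,\tilde{\y}_k)\in \mathcal{S}_{(\ref{Eq_cost_func_our})}$ is verified, the contrapositive yields the claim: an iteration $k$ that violates (\ref{Eq_delta_cond3}) furnishes an explicit feasible witness violating (\ref{Eq_delta_cond}).

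The main computation would go as follows. For the left-hand side of (\ref{Eq_delta_cond}) at $\tilde{\x}=\tilde{\x}_k$, I would use the definition of the weighted semi-norm (\ref{Eq_cost_func2_def2}) together with $\H\H^\dagger=\I_m$ (which holds because $\H$ has full row rank and $\H^\dagger$ is given by (\ref{Eq_cost_func2_def1})) to write
\[
\|\H^\dagger\y-\tilde{\x}_k\|_{\H^T\H}=\|\H(\H^\dagger\y-\tilde{\x}_k)\|_2=\|\y-\H\tilde{\x}_k\|_2,
\]
which is exactly the LHS of (\ref{Eq_delta_cond3}). For the right-hand side, I would plug in the closed-form update (\ref{Eq_cost_func_our_y_sol}) to obtain
\[
\tilde{\y}_k-\tilde{\x}_k=\H^\dagger\y+(\I_n-\H^\dagger\H)\tilde{\x}_k-\tilde{\x}_k=\H^\dagger(\y-\H\tilde{\x}_k),
\]
so the RHS of (\ref{Eq_delta_cond}) at $(\tilde{\x}_k,\tilde{\y}_k)$ collapses to $\frac{1}{(\sigma_n+\tilde{\delta})^2}\|\H^\dagger(\y-\H\tilde{\x}_k)\|_2$, matching the RHS of (\ref{Eq_delta_cond3}).

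It then remains to check feasibility. Applying $\H$ to (\ref{Eq_cost_func_our_y_sol}) and using $\H\H^\dagger=\I_m$ shows $\H\tilde{\y}_k=\y$, so $\tilde{\y}_k$ satisfies the constraint in (\ref{Eq_cost_func_our}). The feasibility of $\tilde{\x}_k$ with respect to the (implicit) prior $s$ is the only delicate point, and is indeed the main obstacle: because $s$ is not given explicitly but only through the denoiser $\mathcal{D}(\cdot;\sigma_n+\delta)$, one must adopt the standard P\&P-style convention that the effective domain of $s$ contains all denoiser outputs, so that $\tilde{\x}_k=\mathcal{D}(\tilde{\y}_{k-1};\sigma_n+\delta)$ is feasible by construction. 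Under this convention $(\tilde{\x}_k,\tilde{\y}_k)\in\mathcal{S}_{(\ref{Eq_cost_func_our})}$, and the algebraic identification of the previous paragraph completes the proof.
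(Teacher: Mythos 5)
Your proposal is correct and takes essentially the same route as the paper's proof: it instantiates the condition in (\ref{Eq_delta_cond}) at the IDBP-generated pair $(\tilde{\x}_k,\tilde{\y}_k)$ by plugging the closed-form update (\ref{Eq_cost_func_our_y_sol}) into $\tilde{\y}$ and using $\| \H^\dagger\y-\tilde{\x}_k \|_{\H^T\H}=\| \y-\H\tilde{\x}_k \|_2$, verifies that the pair is feasible for (\ref{Eq_cost_func_our}), and concludes by contraposition. Your explicit caveat about the feasibility of $\tilde{\x}_k$ with respect to the implicit prior $s$ is the same (tacit) assumption the paper makes when it asserts that $\tilde{\x}_k$ is a feasible point of $s(\tilde{\x})$, so it does not change the argument.
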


\begin{proof}
Assume that $\tilde{\x}_k$ generated by IDBP at some iteration $k$ violates (\ref{Eq_delta_cond3}), then it also violates the equivalent condition
\begin{align}
\label{Eq_delta_cond2}
\frac{1}{\sigma_e^2} \| \H^\dagger\y-\tilde{\x}_k \|_{\H^T\H}^2 \geq \frac{1}{(\sigma_e+\tilde{\delta})^2} \| \H^\dagger \y - \H^\dagger\H\tilde{\x}_k \|_2^2,
\end{align}
where we use $\| \H^\dagger\y-\tilde{\x}_k \|_{\H^T\H}^2 = \| \H(\H^\dagger\y-\tilde{\x}_k) \|_2^2 = \| \y- \H \tilde{\x}_k \|_2^2 $.
The IDBP method pairs $\tilde{\x}_k$ with $\tilde{\y}_k$, computed using (\ref{Eq_cost_func_our_y_sol}).
Note that (\ref{Eq_delta_cond2}) can be obtained simply by plugging $\tilde{\x}=\tilde{\x}_k$ and $\tilde{\y}=\tilde{\y}_k$ into (\ref{Eq_delta_cond}). Therefore, $\tilde{\x}_k$ and its associated $\tilde{\y}_k$ also violate the inequality in (\ref{Eq_delta_cond}). Finally, it is easy to see that $\tilde{\x}_k$ and $\tilde{\y}_k$ are feasible points of (\ref{Eq_cost_func_our}), since $\tilde{\x}_k$ is a feasible point of $s(\tilde{\x})$ and $\tilde{\y}_k$ satisfies $\H\tilde{\y}_k=\y$. Therefore, the condition in (\ref{Eq_delta_cond}) does not hold for all feasible $\tilde{\x}$ and $\tilde{\y}$, which means that $\delta=\tilde{\delta}$ violates it.
\end{proof}

Note that (\ref{Eq_delta_cond3}) can be easily evaluated for each iteration. Thus, violation of (\ref{Eq_delta_cond}) can be spotted (by violation of (\ref{Eq_delta_cond3})) and used for stopping the process, increasing $\delta$ and running the algorithm again. Of course, the opposite direction does not hold. Even when (\ref{Eq_delta_cond3}) is satisfied in all iterations, it does not guarantee satisfying (\ref{Eq_delta_cond}). However, the relaxed condition (\ref{Eq_delta_cond3}) provides an easy way to set $\delta$ with an approximation to the solution of (\ref{Eq_delta_cond}), which gives very good results in our experiments.

\subsection{IDBP for image inpainting}
\label{idbp_inpaint}

In the image inpainting problem, $\H$ is a selection of $m$ rows of $\I_n$. Therefore, $\H^\dagger = \H^T$, which is an $n \times m$ matrix that merely pads with $n-m$ zeros the vector on which it is applied. In this case, $\tilde{\y}_k$ is simply obtained by taking the observed pixels from $\y$ and the missing pixels from $\tilde{\x}_k$.
Moreover, setting $\delta$ according to Proposition \ref{proposition1} becomes ridiculously simple:
Since $\| \y-\H\tilde{\x}_k \|_2^2 = \| \H^\dagger ( \y - \H\tilde{\x}_k) \|_2^2$, it follows that $\delta=0$ satisfies (\ref{Eq_delta_cond3}) (with equality) for any $\tilde{\x}_k$. Obviously, if $\sigma_e=0$, a small positive $\delta$ is required in order to prevent the algorithm from getting stuck (because in this case $\sigma=\sigma_e+\delta=0$).

\subsection{IDBP for image deblurring}
\label{idbp_deblur}

In the image deblurring problem, for a circular shift-invariant blur operator whose kernel is $\h$, the computation of $\tilde{\y}_k$ can be efficiently implemented using Fast Fourier Transform (FFT). 
However, recall that in this case $\H$ is an ill-conditioned $n \times n$ matrix. 
Therefore, we replace $\H^\dagger$ with a regularized inversion of $\H$, using standard Tikhonov regularization, which is given in the Fourier domain by
\begin{align}
\label{Eq_Hinv_approx}
\tilde{\g} \triangleq \frac{ \mathcal{F}^* \{ \h \}  }{ | \mathcal{F} \{ \h \} |^2 + \epsilon \cdot \sigma_e^2 },
\end{align}
where $\mathcal{F} \{\cdot\}$ denotes the FFT operator, and $\epsilon$ is a parameter that controls the amount of regularization in the approximation of $\H^\dagger$.
Then, (\ref{Eq_cost_func_our_y_sol}) can be computed by
\begin{align}
\label{Eq_IDBP_fft}
\tilde{\y}_k = \mathcal{F}^{-1} \Big \{  \tilde{\g} \Big ( \mathcal{F}\{\y\} - \mathcal{F}\{\h\} \mathcal{F}\{\tilde{\x}_k\} \Big ) \Big \} + \tilde{\x}_k,
\end{align}
where $\mathcal{F}^{-1} \{\cdot\}$ denotes the inverse FFT operator.

Condition (\ref{Eq_delta_cond3}) can also be computed using FFT. Denoting the left-hand side (LHS) of (\ref{Eq_delta_cond3}) by $\eta_L$ and its right-hand side (RHS) by $\eta_R$, we have
\begin{align}
\label{Eq_LHS_RHS}
\eta_L &= \frac{1}{\sigma_e^2} \left \| \y- \mathcal{F}^{-1} \Big \{  \mathcal{F}\{\h\} \mathcal{F}\{\tilde{\x}_k\} \Big \} \right \|_2^2,  \nonumber \\
\eta_R &= \frac{1}{(\sigma_e+\delta)^2} \left \| \mathcal{F}^{-1} \Big \{  \tilde{\g} \Big ( \mathcal{F}\{\y\} - \mathcal{F}\{\h\} \mathcal{F}\{\tilde{\x}_k\} \Big ) \Big \} \right \|_2^2.
\end{align}

The deblurring version of IDBP includes two design parameters: $\delta$ and $\epsilon$, which also appear in the RHS of (\ref{Eq_delta_cond3}), i.e. in $\eta_R$ (note that $\tilde{\g}$ depends on $\epsilon$). Therefore, when applying IDBP with a given setting of $(\delta, \epsilon)$,  condition (\ref{Eq_delta_cond3}) can still be examined. Furthermore, note that in order to satisfy (\ref{Eq_delta_cond3}), its RHS can be decreased not only by increasing $\delta$, but also by increasing $\epsilon$ (which increases the denominator of $\tilde{\g}$).

We empirically observed that pairs of $(\delta, \epsilon)$ that give the best deblurring results indeed satisfy condition (\ref{Eq_delta_cond3}), while pairs of $(\delta, \epsilon)$ that lead to bad results often violate this condition. This behavior is demonstrated in Fig. \ref{fig:demonstation_psnr_cond} for {\em house} image in Scenario 1 (see Table \ref{table:blur_kernels} in Section \ref{exp_deblurring} for details about this scenario).
Fig. \ref{fig:demonstation_psnr} shows the PSNR of IDBP, with a plugged-in BM3D denoiser, as a function of the iteration number for several pairs of $(\delta, \epsilon)$. The LHS of (\ref{Eq_delta_cond3}) divided by its RHS (i.e. $\eta_L/\eta_R$) is presented in Fig.  \ref{fig:demonstation_cond} as a function of the iteration number. If this division is less than 1, even for a single iteration, it means that the original condition in (\ref{Eq_delta_cond}) is violated by the associated $(\delta, \epsilon)$. Recall that even when the division is higher than 1 for all iterations, it does not guarantee satisfying (\ref{Eq_delta_cond}). Therefore, a small margin should be kept. For example, the pair ($\delta$=5, $\epsilon$=7e-3), which reaches the highest PSNR in Fig. \ref{fig:demonstation_psnr}, has its smallest LHS/RHS ratio slightly below 3. 
When the margin further increases, graceful degradation in PSNR occurs, as observed for ($\delta$=7, $\epsilon$=7e-3) and ($\delta$=5, $\epsilon$=10e-3).

Equipped with the above observation, we suggest fixing $\delta$ (or $\epsilon$) and automatically tuning $\epsilon$ (or $\delta$) using condition (\ref{Eq_delta_cond3}) with some confidence margin. A scheme for IDBP with automatic tuning of  $\epsilon$ is presented in Algorithm \ref{Auto_IDBP_alg}. Starting with a small value of $\epsilon$, the ratio LHS/RHS of (\ref{Eq_delta_cond3}) is evaluated at the end of each IDBP iteration. If the ratio is smaller than a threshold $\tau$, then $\epsilon$ is slightly increased and IDBP is restarted. 
We do not check the ratio at the first iteration, as it strongly depends on the initial $\tilde{\y}_0$.
An alternative scheme that uses a fixed $\epsilon$ and gradually increases $\delta$ can be obtained in a similar way.
We noticed that the restarts in Algorithm \ref{Auto_IDBP_alg} happen in early iterations (e.g., restarts will occur at the second iteration for the bad initializations in Fig. \ref{fig:demonstation_cond}). Therefore, the proposed initialization scheme is not computationally demanding. 

The efficiency of the auto-tuned IDBP is demonstrated by improving the performance for the worst two initializations in Fig. \ref{fig:demonstation_psnr}, i.e. ($\delta$=2, $\epsilon$=7e-3) and ($\delta$=5, $\epsilon$=3e-3). For each of them, one parameter is kept as is and the second is auto-tuned using a threshold $\tau=3$. The results are shown in Figs. \ref{fig:demonstation_psnr_tuned} and \ref{fig:demonstation_cond_tuned}.

\begin{figure}[t]
  \centering
  \begin{subfigure}[b]{0.5\linewidth}
    \centering\includegraphics[width=120pt]{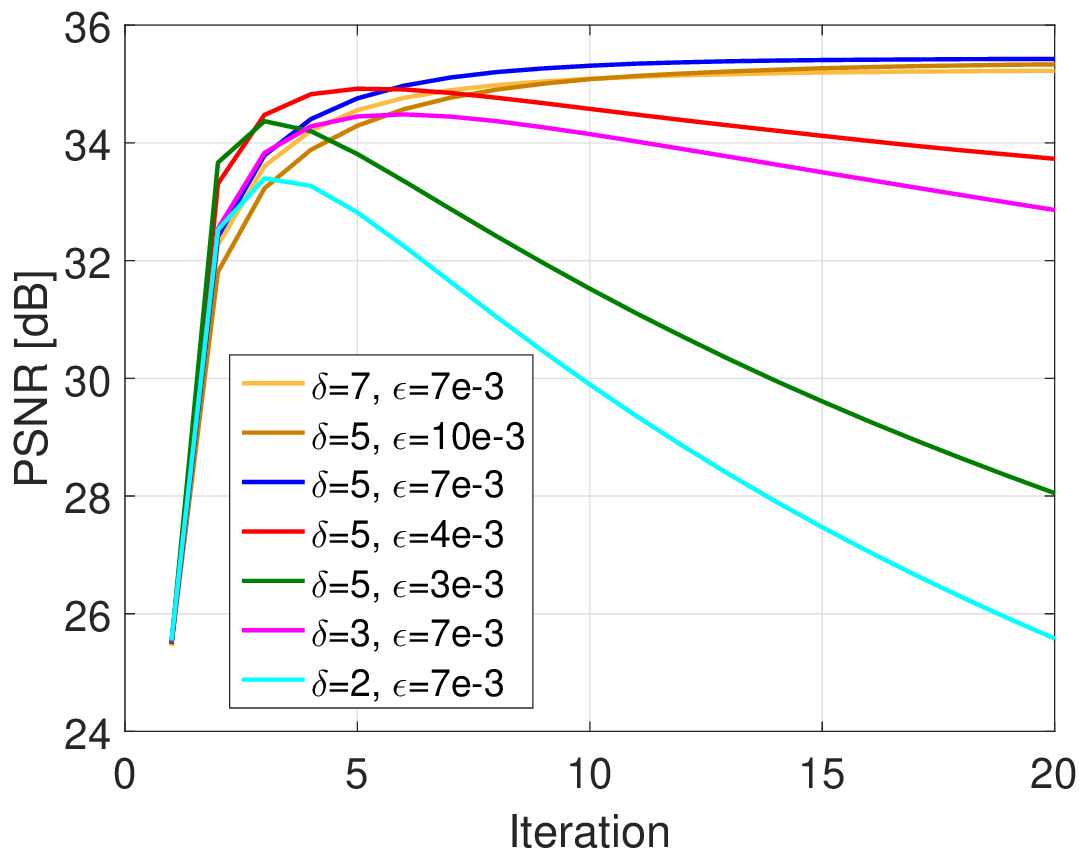}
    \caption{\label{fig:demonstation_psnr}}
  \end{subfigure}%
  \begin{subfigure}[b]{0.5\linewidth}
    \centering\includegraphics[width=120pt]{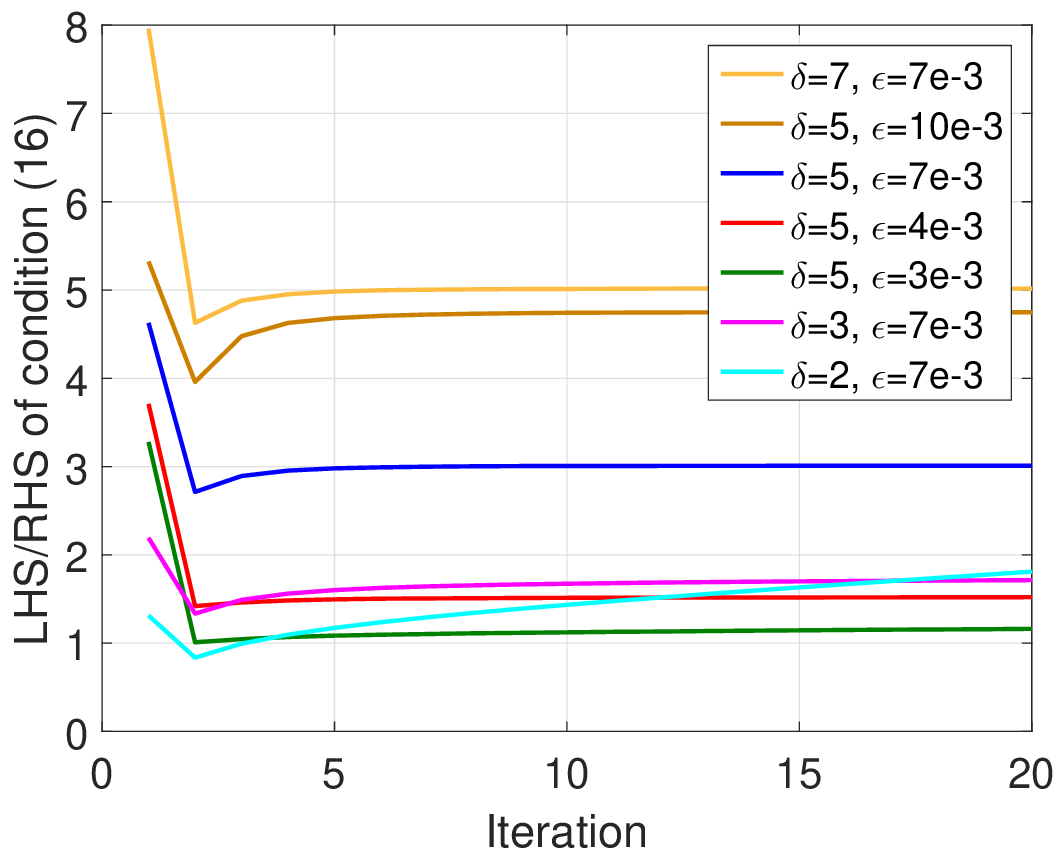}
    \caption{\label{fig:demonstation_cond}}
  \end{subfigure}
  \begin{subfigure}[b]{0.49\linewidth}
    \centering\includegraphics[width=120pt]{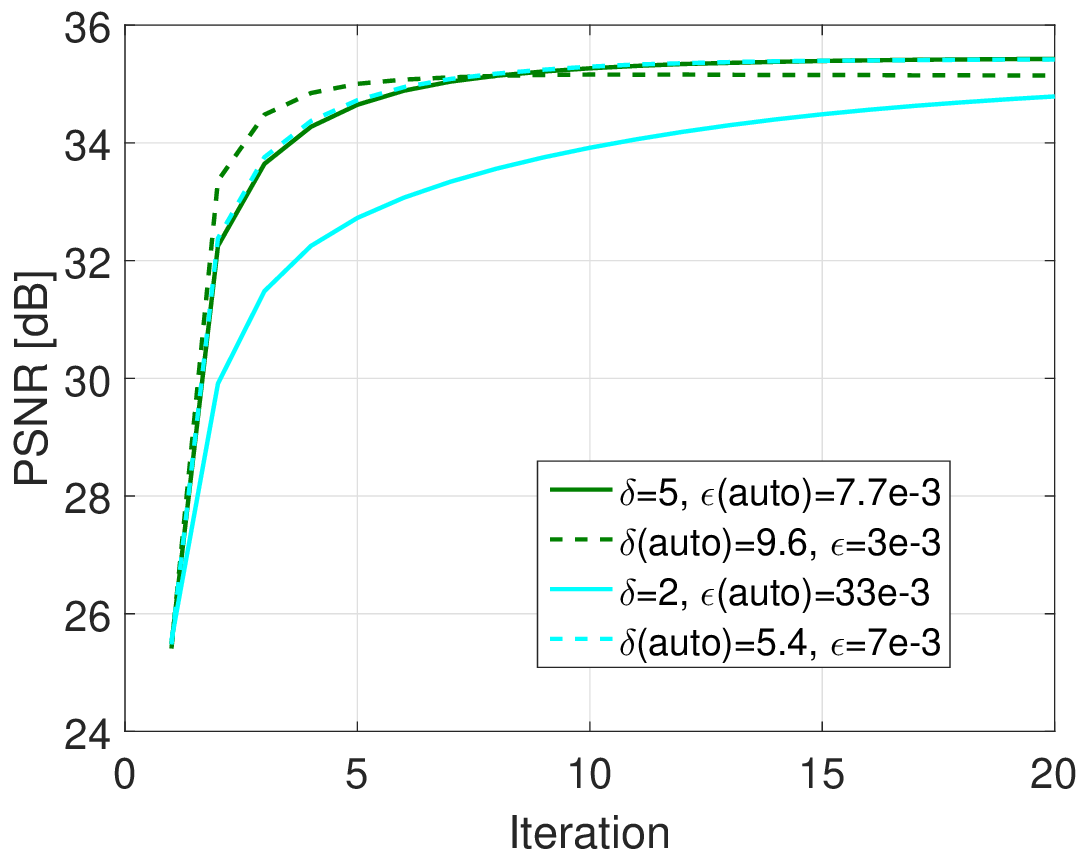}
    \caption{\label{fig:demonstation_psnr_tuned}}
  \end{subfigure}
  \begin{subfigure}[b]{0.49\linewidth}
    \centering\includegraphics[width=120pt]{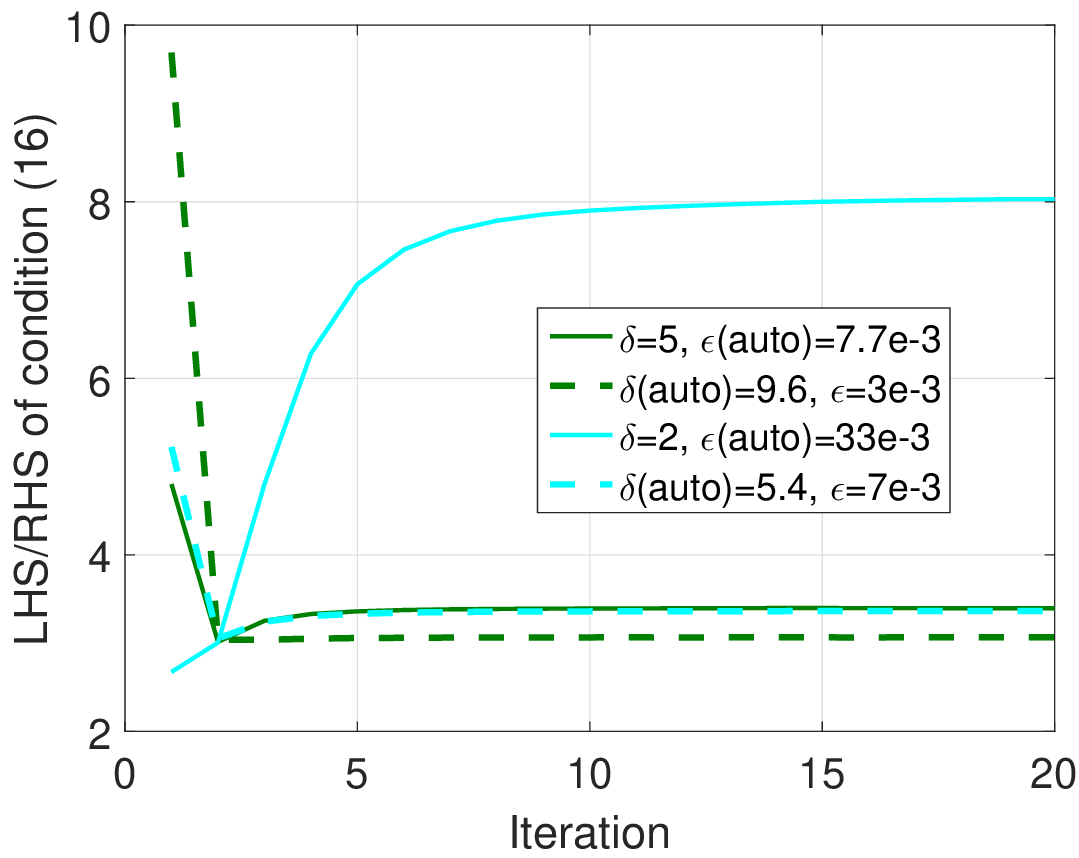}
    \caption{\label{fig:demonstation_cond_tuned}}
  \end{subfigure}
  \caption{(\subref{fig:demonstation_psnr}) IDBP deblurring results (PSNR vs. iteration number) for {\em house} in Scenario 1 for several pairs of $(\delta, \epsilon)$; (\subref{fig:demonstation_cond}) LHS of (\ref{Eq_delta_cond3}) divided by its RHS vs. iteration number. Note that if any iteration's value is less than 1, then the condition in (\ref{Eq_delta_cond}) is violated. Since the opposite direction does not hold, it is preferable to keep a margin above 1; (\subref{fig:demonstation_psnr_tuned}) the results of the auto-tuned IDBP initialized with the values of $(\delta, \epsilon)$ that give the two worst results in (\subref{fig:demonstation_psnr}); (\subref{fig:demonstation_cond_tuned}) LHS of (\ref{Eq_delta_cond3}) divided by its RHS after auto-tuning.}
\label{fig:demonstation_psnr_cond}
\end{figure}

\begin{algorithm}
\caption{Auto-tuned IDBP for deblurring}
\vspace{2mm}
\kwInput{$\h, \y, \sigma_e$, denoising operator $\mathcal{D}(\cdot;\sigma)$, stopping criterion. $\y = \x*\h+\e$, such that $\e \sim \mathcal{N}(\0,\sigma_e^2\I_n)$ and $\x$ is an unknown signal whose prior model is specified by $\mathcal{D}(\cdot;\sigma)$.}
\kwOutput{$\hat{\x}$ an estimate for $\x$.}
\kwParameters{$\tilde{\y}_0=$ some initialization, $k=0$, $\delta=$ moderate fixed value, $\epsilon=$ small initial value, $\Delta\epsilon=$ small increment, $\tau=$ confidence margin greater than 1.}
\kwDefaults{$\tilde{\y}_0=\y$, $\delta=5$, $\epsilon=$5e-4, $\Delta\epsilon$=1e-4, $\tau=3$.}
\While{stopping criterion not met}{
    $k = k+1$\;
    $\tilde{\x}_k=\mathcal{D}(\tilde{\y}_{k-1};\sigma_e+\delta)$\;
    Compute $\tilde{\y}_k$ using (\ref{Eq_IDBP_fft}) (note that $\tilde{\g}$ depends on $\epsilon$)\;
    Compute $\eta_L$ and $\eta_R$ using (\ref{Eq_LHS_RHS}) (i.e. LHS and RHS of (\ref{Eq_delta_cond3})\;
     \uIf{$k>1$ and $\eta_L/\eta_R<\tau$}
        {
          $\epsilon = \epsilon + \Delta\epsilon$\;
          Restart process: $k = 0$\;
    }
}
$\hat{\x} = \tilde{\x}_k$\;
\label{Auto_IDBP_alg}
\end{algorithm}

\section{Mathematical Analysis of the Algorithm}
\label{sec_analysis}

\subsection{Analysis of the sequence $\{\tilde{\y}_k\}$}
\label{sec_sequence_yk}

The IDBP algorithm creates the sequence $\{\tilde{\y}_k\}$ that can be interpreted as a sequence of updated measurements. It is desired that $\tilde{\y}_k$ is improved with each iteration, i.e. that $\tilde{\x}_{k+1}$, obtained from $\tilde{\y}_k$, estimates $\x$ better than $\tilde{\x}_k$, which is obtained from $\tilde{\y}_{k-1}$.

Assuming that the result of the denoiser, denoted by $\overline{\x}$, is perfect, i.e. $\overline{\x}=\x$, we get from (\ref{Eq_cost_func_our_y_sol})
\begin{align}
\label{Eq_y_overline}
\overline{\y} &= \H^\dagger\y + (\I_n - \H^\dagger\H)\overline{\x}  \nonumber \\
&= \H^\dagger (\H\x+\e) + (\I_n - \H^\dagger\H)\x  \nonumber \\
&= \x + \H^\dagger \e.
\end{align}
The last equality describes a model that has only noise (possibly colored), and is much easier to deal with than the original model (\ref{Eq_general_model}). Therefore, $\overline{\y}$ can be considered as the optimal improved measurements that our algorithm can achieve. As we wish to make no specific assumptions on the denoising scheme $\tilde{\x}_k=\mathcal{D}(\tilde{\y}_{k-1};\sigma)$, improvement of $\{\tilde{\y}_k\}$ will be measured by the Euclidean distance to $\overline{\y}$.

Denote by $\P_H \triangleq \H^\dagger\H$ the orthogonal projection onto the row space of $\H$, and its orthogonal complement by $\Q_H \triangleq \I_n-\H^\dagger\H$. 
The updated measurements $\tilde{\y}_k$ are always consistent with $\y$ on $\P_H\x$, and do not depend on $\P_H\tilde{\x}_k$, as can be seen from
\begin{align}
\label{Eq_y_tilde_b}
\tilde{\y}_k &= \H^\dagger (\H\x+\e) + \Q_H \tilde{\x}_k  \nonumber \\
&= \P_H\x + \H^\dagger \e + \Q_H \tilde{\x}_k.
\end{align}
Thus, the following theorem ensures that iteration $k$ improves the results, provided that $\tilde{\x}_k$ is closer to $\x$ than $\tilde{\y}_{k-1}$ on the null space of $\H$, i.e.,
\begin{align}
\label{Eq_denoiser_assump_thm}
\| \Q_H(\tilde{\x}_k - \x) \|_2 < \| \Q_H(\tilde{\y}_{k-1} - \x) \|_2.
\end{align}

\begin{theorem}
\label{theorem1}
Assuming that (\ref{Eq_denoiser_assump_thm}) holds at the $k$th iteration of IDBP, then we have
\begin{align}
\label{Eq_theorem1}
\| \tilde{\y}_k - \overline{\y} \|_2 < \| \tilde{\y}_{k-1} - \overline{\y} \|_2.
\end{align}
\end{theorem}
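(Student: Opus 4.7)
The plan is to reduce the desired inequality directly to the hypothesis (\ref{Eq_denoiser_assump_thm}) by expressing $\tilde{\y}_k - \overline{\y}$ and $\tilde{\y}_{k-1} - \overline{\y}$ in closed form via identities (\ref{Eq_y_overline}) and (\ref{Eq_y_tilde_b}). No new estimates on the denoiser are needed; the theorem is essentially an algebraic restatement of the assumption.

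First, I would subtract (\ref{Eq_y_overline}) from (\ref{Eq_y_tilde_b}) evaluated at iteration $k$. The $\H^\dagger\e$ terms cancel, and $\P_H\x - \x = -\Q_H\x$, yielding $\tilde{\y}_k - \overline{\y} = \Q_H(\tilde{\x}_k - \x)$. Applying the same computation at index $k-1$ gives $\tilde{\y}_{k-1} - \overline{\y} = \Q_H(\tilde{\x}_{k-1} - \x)$. Thus both sides of (\ref{Eq_theorem1}) are norms of $\Q_H$-projections.

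Second, to bridge the apparent mismatch between the hypothesis (which involves $\Q_H(\tilde{\y}_{k-1} - \x)$) and the quantity produced in Step 1 (which involves $\Q_H(\tilde{\x}_{k-1} - \x)$), I would verify $\Q_H \tilde{\y}_{k-1} = \Q_H \tilde{\x}_{k-1}$. Apply $\Q_H$ to the decomposition $\tilde{\y}_{k-1} = \P_H\x + \H^\dagger\e + \Q_H\tilde{\x}_{k-1}$ in (\ref{Eq_y_tilde_b}) and use the orthogonal-projection identities $\Q_H \P_H = \0$, $\Q_H \H^\dagger = \0$ (both immediate from $\H^\dagger\H\H^\dagger = \H^\dagger$), and idempotence $\Q_H^2 = \Q_H$. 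The first two terms vanish and the last reduces to $\Q_H \tilde{\x}_{k-1}$. Consequently $\Q_H(\tilde{\y}_{k-1} - \x) = \Q_H(\tilde{\x}_{k-1} - \x)$.

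Third, chain the equalities with the hypothesis:
\begin{align*}
\| \tilde{\y}_k - \overline{\y} \|_2 = \| \Q_H(\tilde{\x}_k - \x) \|_2 < \| \Q_H(\tilde{\y}_{k-1} - \x) \|_2 = \| \Q_H(\tilde{\x}_{k-1} - \x) \|_2 = \| \tilde{\y}_{k-1} - \overline{\y} \|_2,
\end{align*}
where the strict inequality is precisely (\ref{Eq_denoiser_assump_thm}). The main (and only) subtle point is the bookkeeping in Step 2, which reconciles the hypothesis phrased in terms of $\tilde{\y}_{k-1}$ with the identity produced in Step 1 expressed in terms of $\tilde{\x}_{k-1}$; everything else is direct substitution.
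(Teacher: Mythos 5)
Your proof is correct and takes essentially the same route as the paper: the identity $\tilde{\y}_k - \overline{\y} = \Q_H(\tilde{\x}_k - \x)$ plus the observation $\Q_H\tilde{\y}_{k-1} = \Q_H\tilde{\x}_{k-1}$ (the paper's Eq.~(\ref{Eq_proof1})), chained with the hypothesis (\ref{Eq_denoiser_assump_thm}), is exactly the argument in (\ref{Eq_proof2}). Your Step 2, spelling out $\Q_H\P_H=\0$, $\Q_H\H^\dagger=\0$ and $\Q_H^2=\Q_H$, is just a more explicit justification of that same identity.
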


\begin{proof}
Note that
\begin{align}
\label{Eq_proof1}
\Q_H \tilde{\y}_{k-1} = \Q_H ( \H^\dagger \y + \Q_H \tilde{\x}_{k -1} ) = \Q_H \tilde{\x}_{k -1}.
\end{align}
We obtain (\ref{Eq_theorem1}) by
\begin{align}
\label{Eq_proof2}
\| \tilde{\y}_k - \overline{\y} \|_2  & = \| (\P_H\x + \H^\dagger \e + \Q_H \tilde{\x}_k) - (\x + \H^\dagger \e) \|_2 \nonumber \\
& = \| \Q_H (\tilde{\x}_k - \x) \|_2 \nonumber \\
& < \| \Q_H (\tilde{\x}_{k-1} - \x) \|_2 \nonumber \\
& = \| (\P_H\x + \H^\dagger \e + \Q_H \tilde{\x}_{k-1}) - (\x + \H^\dagger \e) \|_2 \nonumber \\
& = \| \tilde{\y}_{k-1} - \overline{\y} \|_2,
\end{align}
where the inequality follows from (\ref{Eq_denoiser_assump_thm}) and (\ref{Eq_proof1}).
\end{proof}

A denoiser that makes use of a good prior (and suitable $\sigma$) is expected to satisfy (\ref{Eq_denoiser_assump_thm}), at least in early iterations. For example, in the inpainting problem $\Q_H$ is associated with the missing pixels, and in the deblurring problem $\Q_H$ is associated with the data that suffer the greatest loss by the blur kernel. Therefore, in both cases $\Q_H\tilde{\x}_{k}$ is expected to be closer to $\Q_H\x$ than $\Q_H\tilde{\y}_{k-1}$.
Note that if (\ref{Eq_denoiser_assump_thm}) holds for all iterations, then Theorem \ref{theorem1} ensures monotonic improvement and convergence of $\{\tilde{\y}_k\}$, and thus, a fixed point convergence of IDBP.
However, note that it does not guarantee that $\overline{\y}$ is the limit of the sequence $\{\tilde{\y}_k\}$.

\subsection{Recovery guarantees}
\label{sec_guarantees}

Similar to P\&P, in order to prove more than a fixed point convergence of IDBP, strict assumptions on the denoising scheme are required. For global convergence of P\&P, it is enough to assume that the denoiser is non-expansive and has a symmetric gradient \cite{sreehari2016plug}, which allows using the proximal mapping theorem of Moreau \cite{moreau1965proximite}. However, non-expansiveness property of a denoiser is very demanding, as it requires that for a given noise level $\sigma$ we have
\begin{align}
\label{Eq_non_expansive}
\| \mathcal{D}(\z_1;\sigma) - \mathcal{D}(\z_2;\sigma) \|_2 \leq K_{\sigma} \| \z_1 - \z_2 \|_2,
\end{align}
for any $\z_1$ and $\z_2$ in $\Bbb R^n$, with $K_{\sigma} \leq 1$.

In this work we take a different route that exploits the structure of the IDBP algorithm, where the denoiser's output is always projected onto the null space of $\H$. Instead of assuming (\ref{Eq_denoiser_assump_thm}), we use the following assumptions:

\begin{condition}
\label{cond1}
The denoiser is bounded, in the sense of
\begin{align}
\label{Eq_bounded}
\| \mathcal{D}(\z;\sigma) - \z \|_2 \leq \sigma B,
\end{align}
for any $\z \in \Bbb R^n$, where $B$ is a universal constant independent of $\sigma$.
\end{condition}

\begin{condition}
\label{cond3}
For a given noise level $\sigma>0$, the projection of the denoiser onto the null space of $\H$ is a contraction, i.e., it satisfies
\begin{align}
\label{Eq_contraction}
\| \Q_H \mathcal{D}(\z_1;\sigma) - \Q_H \mathcal{D}(\z_2;\sigma) \|_2 \leq K_{\sigma} \| \z_1 - \z_2 \|_2,
\end{align}
for any $\z_1 \neq \z_2$ in $\Bbb R^n$, where $K_{\sigma} < 1$, and $\Q_H \triangleq \I_n-\H^\dagger\H$.
\end{condition}

Condition \ref{cond1} implies that $\mathcal{D}(\z;0)=\z$, as can be expected from a denoiser. Thus, it prevents considering a trivial mapping, e.g. $\mathcal{D}(\z;\sigma)=\0$ for all $\z$, which trivially satisfies Condition \ref{cond3}. Regarding the second condition, even though it describes a contraction, it considers the operator $\Q_H \mathcal{D}(\cdot;\sigma)$. Therefore, for some cases of $\H$, it might be weaker than non-expansiveness of $\mathcal{D}(\cdot;\sigma)$. Our main recovery guarantee is given in the following theorem.

\begin{theorem}
\label{theorem2}
Let $\y=\H\x+\e$, apply IDBP with some $\sigma>0$ for the denoising operation, and assume that Condition \ref{cond1} holds. Assume also that Condition \ref{cond3} holds for this choice of $\sigma$. Then, with the notation of IDBP we have
\begin{align}
\label{Eq_theorem2}
\| \tilde{\x}_{k+1} - \x \|_2 \,\, \leq \,\, K_{\sigma}^k \| \tilde{\y}_0 - \overline{\y} \|_2  + \frac{1}{1-K_{\sigma}} \| \H^\dagger \e \|_2 + C_\sigma,
\end{align}
where $\overline{\y} = \x + \H^\dagger \e$ and $C_\sigma \triangleq (\frac{1}{1-K_\sigma}+5)\sigma B$.
\end{theorem}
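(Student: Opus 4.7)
The plan is to decompose everything through the complementary orthogonal projectors $\P_H=\H^\dagger\H$ and $\Q_H=\I_n-\H^\dagger\H$, use Condition \ref{cond1} to control the row-space component, and use Condition \ref{cond3} plus a one-step recursion to control the null-space component.

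First, I would set up two identities that do almost all of the bookkeeping. Since $\H\H^\dagger=\I_m$ and the range of $\H^\dagger$ is the row space of $\H$, we have $\P_H\H^\dagger = \H^\dagger$ and $\Q_H\H^\dagger=\0$. Using $\tilde{\y}_k=\H^\dagger\y+\Q_H\tilde{\x}_k$ together with $\y=\H\x+\e$ and $\overline{\y}=\x+\H^\dagger\e$, this gives on one hand $\P_H\tilde{\y}_k=\H^\dagger\y=\P_H\overline{\y}$ and on the other hand
\begin{equation*}
\tilde{\y}_k-\overline{\y}=\Q_H(\tilde{\x}_k-\x),\qquad \Q_H\overline{\y}=\Q_H\x.
\end{equation*}
These are the two facts the rest of the argument leans on.

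Next, I would bound the row-space part. Since $\tilde{\x}_{k+1}=\mathcal{D}(\tilde{\y}_k;\sigma)$, Condition \ref{cond1} gives $\|\tilde{\x}_{k+1}-\tilde{\y}_k\|_2\le\sigma B$, hence $\|\P_H(\tilde{\x}_{k+1}-\tilde{\y}_k)\|_2\le\sigma B$. Combining with $\P_H\tilde{\y}_k-\P_H\x=\H^\dagger\e$ (a direct calculation) yields
\begin{equation*}
\|\P_H(\tilde{\x}_{k+1}-\x)\|_2\le \sigma B + \|\H^\dagger\e\|_2.
\end{equation*}
For the null-space part I would introduce the ``ideal'' denoising $\mathcal{D}(\overline{\y};\sigma)$ as a pivot: Condition \ref{cond3} bounds $\|\Q_H\mathcal{D}(\tilde{\y}_k;\sigma)-\Q_H\mathcal{D}(\overline{\y};\sigma)\|_2$ by $K_\sigma\|\tilde{\y}_k-\overline{\y}\|_2$, while Condition \ref{cond1} together with $\Q_H\overline{\y}=\Q_H\x$ bounds $\|\Q_H\mathcal{D}(\overline{\y};\sigma)-\Q_H\x\|_2$ by $\sigma B$. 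Adding these gives
\begin{equation*}
\|\Q_H(\tilde{\x}_{k+1}-\x)\|_2\le K_\sigma\|\tilde{\y}_k-\overline{\y}\|_2+\sigma B.
\end{equation*}

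Then I would set up the recursion. Applying the same bound with $k+1$ replaced by $k$ and using $\tilde{\y}_k-\overline{\y}=\Q_H(\tilde{\x}_k-\x)$ from Step 1 produces
\begin{equation*}
\|\tilde{\y}_k-\overline{\y}\|_2\le K_\sigma\|\tilde{\y}_{k-1}-\overline{\y}\|_2+\sigma B,
\end{equation*}
so iterating and summing the geometric series yields $\|\tilde{\y}_k-\overline{\y}\|_2\le K_\sigma^k\|\tilde{\y}_0-\overline{\y}\|_2+\sigma B/(1-K_\sigma)$. Plugging this back into the $\Q_H$ bound and then combining $\|\tilde{\x}_{k+1}-\x\|_2\le\|\P_H(\tilde{\x}_{k+1}-\x)\|_2+\|\Q_H(\tilde{\x}_{k+1}-\x)\|_2$ gives a bound of the form $K_\sigma^{k+1}\|\tilde{\y}_0-\overline{\y}\|_2+\|\H^\dagger\e\|_2+\sigma B(1+1/(1-K_\sigma))$, which is dominated by the stated bound after using $K_\sigma^{k+1}\le K_\sigma^k$, $\|\H^\dagger\e\|_2\le\|\H^\dagger\e\|_2/(1-K_\sigma)$, and absorbing constants into $C_\sigma$.

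The main obstacle I expect is not algebraic but conceptual: one has to recognize that Condition \ref{cond3} is only useful after pivoting through $\mathcal{D}(\overline{\y};\sigma)$, and that the reason the pivot works is the row-space identity $\Q_H\overline{\y}=\Q_H\x$, i.e.\ the noise $\H^\dagger\e$ is invisible to $\Q_H$. Once that observation is in hand, the row-space/null-space separation makes the contraction act only on $\tilde{\y}_k-\overline{\y}$ (never on $\e$ itself), which is what lets the noise term appear with the clean prefactor $1/(1-K_\sigma)$ rather than being amplified geometrically by the iteration.
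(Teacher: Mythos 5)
Your proof is correct, and it takes a genuinely different route from the paper's. The paper argues directly on $\| \tilde{\x}_{k+1}-\x \|_2$: it first proves an auxiliary quasi-Lipschitz lemma from Condition \ref{cond1} ($\| \mathcal{D}(\z_1;\sigma)-\mathcal{D}(\z_2;\sigma) \|_2 \leq \| \z_1-\z_2 \|_2 + 2\sigma B$), then chains triangle inequalities through the two pivots $\mathcal{D}(\overline{\y};\sigma)$ and $\mathcal{D}(\x;\sigma)$ to get $\| \tilde{\x}_{k+1}-\x \|_2 \leq \| \tilde{\y}_k-\overline{\y} \|_2 + \| \H^\dagger\e \|_2 + 5\sigma B$, and separately derives the recursion $\| \tilde{\y}_k-\overline{\y} \|_2 \leq K_\sigma \| \tilde{\y}_{k-1}-\overline{\y} \|_2 + K_\sigma \| \H^\dagger\e \|_2 + \sigma B$ by pivoting through both points again inside $\Q_H$. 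You instead split the error orthogonally into its $\P_H$ and $\Q_H$ components: the row-space part is pinned down exactly by $\P_H\tilde{\y}_k = \P_H\x + \H^\dagger\e$ plus a single use of Condition \ref{cond1}, and in the null space you pivot only through $\mathcal{D}(\overline{\y};\sigma)$, exploiting $\Q_H\overline{\y}=\Q_H\x$ to close with Condition \ref{cond1} rather than a second pivot through $\mathcal{D}(\x;\sigma)$. This yields the cleaner recursion $\| \tilde{\y}_k-\overline{\y} \|_2 \leq K_\sigma \| \tilde{\y}_{k-1}-\overline{\y} \|_2 + \sigma B$ (no noise term inside the recursion) and the final bound $K_\sigma^{k+1}\| \tilde{\y}_0-\overline{\y} \|_2 + \| \H^\dagger\e \|_2 + \bigl(1+\tfrac{1}{1-K_\sigma}\bigr)\sigma B$, which is dominated by the theorem's right-hand side, so the claim follows; in fact your constants are strictly sharper: the noise enters with coefficient $1$ rather than $\tfrac{1}{1-K_\sigma}$ (nicely confirming the paper's own remark that this term "may be an artifact of our proof"), and your additive constant is $(1+\tfrac{1}{1-K_\sigma})\sigma B$ versus $C_\sigma=(5+\tfrac{1}{1-K_\sigma})\sigma B$. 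The paper pays $2\sigma B$ of slack each time its lemma is invoked; your orthogonal decomposition avoids that entirely, at the only cost of the (correctly used) facts $\| \v \|_2 \leq \| \P_H\v \|_2 + \| \Q_H\v \|_2$, $\P_H\H^\dagger=\H^\dagger$, and $\Q_H\H^\dagger=\0$.
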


The proof of Theorem \ref{theorem2} appears in Appendix \ref{app:thm}.

Theorem \ref{theorem2} provides an upper bound on the error of IDBP w.r.t. the true signal $\x$. 
Despite the fact that Condition \ref{cond3} may not be verified for the widely-used denoisers (similar to the non-expansiveness condition required for convergence of P\&P), the bound is useful because it demonstrates the effect of different parameters on the convergence rate and accuracy. 
The implications of the bound are based on the observation that Condition \ref{cond3} implies inverse proportion between $\sigma$ and $K_\sigma$. To see this, note that the smaller $\sigma$ is, the smaller is the effect of the denoiser on its input. Therefore, since (\ref{Eq_contraction}) needs to be satisfied for any two signals $\z_1$ and $\z_2$, a larger $K_\sigma$ is required. 

Equipped with this observation, from the first term in the bound it can be seen that applying IDBP with a relatively large $\sigma$ is expected to accelerate its convergence, since $K_\sigma$ is smaller in this case. Decreasing $K_\sigma$ also reduces the second term. However, this term may be an artifact of our proof. Interestingly, the third term $C_\sigma$ suggests using IDBP with the smallest possible $\sigma$, for which the increasing effect on $K_\sigma$ is small and can be compensated by using more iterations. 
Therefore, in order to obtain a more accurate result, and assuming no restrictions on the number of iterations, smaller $\sigma$ that reduces $C_\sigma$ is beneficial.
The last observation agrees with our suggestion to choose $\delta$ according to (\ref{Eq_delta_cond}), where $\sigma=\sigma_e+\delta$ is minimized (under a constraint that aims to prevent losing solutions when (\ref{Eq_cost_func_our}) is being solved instead of (\ref{Eq_cost_func3})).

To the best of our knowledge, there is no equivalent result like Theorem \ref{theorem2} for P\&P, as its existing convergence guarantees refer to approaching a minimizer of the original cost function (\ref{Eq_cost_func1}), which is not necessarily identical to $\x$. Therefore, even though we propose an alternative method to minimize (\ref{Eq_cost_func1}), we choose to consider IDBP error w.r.t. the true $\x$. Note though that the proof technique we show here can be also used to bound the Euclidean distance between the IDBP estimation and a (pre-computed) solution of (\ref{Eq_cost_func1}) with only minor technical changes.

\section{Experiments}
\label{sec_Experiments}

We examine the performance of IDBP for two test scenarios: the inpainting and the deblurring problems.\footnote{Matlab code available at https://github.com/tomtirer/IDBP.} We compare the IDBP performance to P\&P and to the recent state-of-the-art method IRCNN \cite{zhang2017learning}. The IRCNN is based on a similar idea as P\&P. It trains a set of 25 denoising neural networks (DNNs), each for a different noise level, and plugs them into a quadratic penalty method scheme \cite{nocedal2006sequential} to minimize (\ref{Eq_pnp_cost_func2}). 
Such a scheme requires increasing the penalty parameter between iterations, which is translated to using about two dozen DNNs for each inverse problem. 

For the comparison of IDBP and P\&P we use BM3D \cite{dabov2007image} as the denoising algorithm, and denote the resulting methods by IDBP-BM3D and P\&P-BM3D, respectively.  For a fair comparison between IDBP and IRCNN, we plug the trained DNNs of the latter into our IDBP scheme, and denote the resulting method by IDBP-CNN. We emphasize that {\em IDBP-CNN requires a single DNN for each inverse problem}, as we do not modify $\delta$ between iterations.

In addition, for each one of the two problems: inpainting and deblurring, we compare the performance of the above methods to another algorithm that has been specially tailored for that problem \cite{danielyan2012bm3d}, \cite{ram2013image}.
We use the following eight test images in all experiments: {\em cameraman}, {\em house}, {\em peppers}, {\em Lena}, {\em Barbara}, {\em boat}, {\em hill} and {\em couple}. We also report average results on BSD68 dataset, which includes 68 grayscale images of size 481$\times$321 pixels. This dataset is used in many works, e.g. \cite{zhang2017learning, roth2005fields}.

\subsection{Image inpainting}

In the image inpainting problem, $\H$ is a selection of $m$ rows of $\I_n$ and $\H^\dagger = \H^T$, which simplifies both P\&P and IDBP. In P\&P, the first step can be solved for each pixel individually. In IDBP, $\tilde{\y}_k$ is obtained merely by taking the observed pixels from $\y$ and the missing pixels from $\tilde{\x}_k$. 
Note that the computational cost of each iteration of P\&P-BM3D and IDBP-BM3D is of the same scale, dominated by the complexity of the BM3D denoiser. 
Therefore, the overall complexity of P\&P-BM3D and IDBP-BM3D can be compared by the number of iterations each technique is using.
Similarly, the computational cost of each iteration of IRCNN and IDBP-CNN is of the same scale, and their overall complexity is determined by the number of iterations that they use.
For all methods we use the result of a simple median scheme as their initialization (e.g. for $\check{\v}_0$ in P\&P and for $\tilde{\y}_0$ in IDBP). 

The first experiment demonstrates the performance of IDBP, P\&P, IRCNN and inpainting based on Image Processing using Patch Ordering (IPPO) approach \cite{ram2013image}, for the noiseless case ($\sigma_e = 0$) with 80\% missing pixels, selected at random. For {\em peppers}, the first and last rows and columns, which contain defective intensity values are ignored, as they damage the quality assessment obtained by PSNR.
For IPPO and IRCNN we use the code supplied by the authors\footnote{Downloaded from \url{http://www.cs.technion.ac.il/~elad/Various/IPPOBox.zip}, and \url{https://github.com/cszn/IRCNN}.}, where the same scenario is examined.
The parameters of P\&P-BM3D are optimized for best reconstruction quality. We use $\beta=1$, $\lambda=10/255$ and 150 iterations, and also set the noise standard deviation to 0.001, i.e. nonzero, in order to compute $\check{\x}_k$.

Considering IDBP, in Section \ref{idbp_inpaint}, it is suggested that $\delta=0$. However, since in this case $\sigma_e+\delta=0$, a small positive $\delta$, e.g. $\delta=1$, is required. Indeed, this setting gives good performance, but also requires many more iterations than the other methods. Therefore, we use an alternative approach. We use a larger value for $\delta$ but take the last $\tilde{\y}_k$ as the final estimate, which is equivalent to performing the last denoising with the recommended $\delta=0$. We set $\delta=5$ for IDBP-BM3D, which allows us to use only 150 iterations (same as P\&P-BM3D), and $\delta=10$ for IDBP-CNN, which requires only 30 iterations (same as IRCNN).
Fig. \ref{fig:IDBP_inpainting_delta1_5} shows the results of IDBP-BM3D for the {\em house} image. It approves that the alternative implementation performs well and requires significantly less iterations (note that the x-axis has a logarithmic scale). Therefore, for the comparison of the different inpainting methods in this experiment (where $\sigma_e=0$), we use the alternative implementation of IDBP. 
The empirical behavior observed here, agrees with the theoretical observation at the end of Section \ref{sec_guarantees}: larger $\sigma=\sigma_e+\delta$ requires less iterations (due to smaller $K_\sigma$) but results in higher error. Note also that it is possible to decrease $\delta$ as the iterations increase. However, in this work {\em we aim at demonstrating the performance of the IDBP scheme with minimal parameter tuning as possible}.

The results (PSNR and SSIM \cite{wang2004image}) of the algorithms are given in Table \ref{table:noiselss_inpainting}. In these experiments, the BM3D prior outperforms the learned one. IDBP-BM3D is usually better than IPPO, but slightly inferior to P\&P-BM3D. This is the cost of accelerating IDBP by setting $\delta$ to a value which is significantly larger than zero. However, this observation also hints that IDBP may shine for noisy measurements, where $\delta=0$ can be used without increasing the number of iterations. We also remark that IPPO gives the best results for {\em Barbara} because in this image P\&P-BM3D and IDBP-BM3D require more than the fixed 150 iterations.

\begin{figure}
    \centering
    \includegraphics[width=0.35\textwidth]{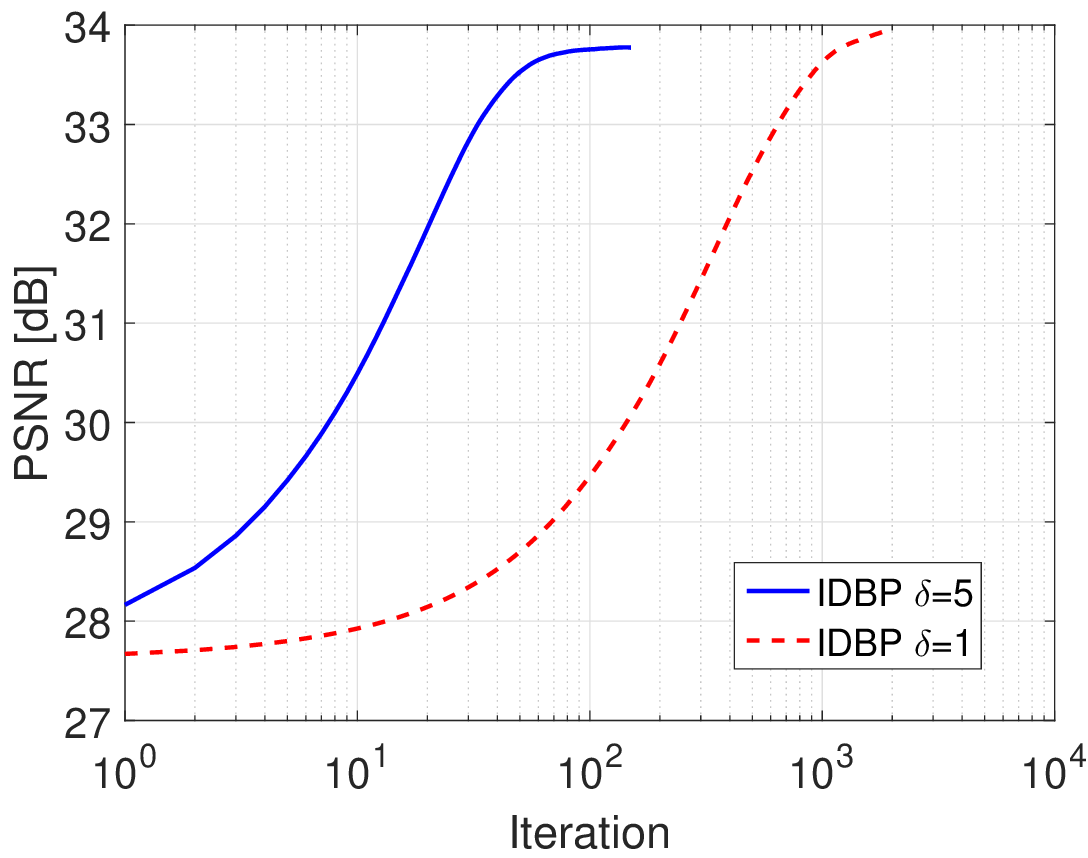}
    \caption{IDBP-BM3D recovery (PSNR vs. iteration) of {\em house} test image with 80\% missing pixels and no noise.} \label{fig:IDBP_inpainting_delta1_5}
\end{figure}

\begin{table*}
\renewcommand{\arraystretch}{1.3}
\caption{Inpainting results (PSNR in dB / SSIM) for 80\% missing pixels and $\sigma_e=0$.} \label{table:noiselss_inpainting}
\centering
    \begin{tabular}{ | l | l | l | l | l | l | l | l | l |}
    \hline
            & {\em cameraman} & {\em house} & {\em peppers} & {\em Lena} & {\em Barbara} & {\em boat} & {\em hill} & {\em couple} \\ \hline
    IPPO & 24.78 / 0.832 & 32.64 / 0.909  & {28.22} / 0.882 & 31.84 / 0.895 & {\bfseries 29.89} / {\bfseries 0.906} & 28.17 / 0.822  & 29.47 / 0.815 & 28.22 / 0.842 \\ \hline
    P\&P-BM3D & 24.83 / {\bfseries 0.845} & {\bfseries 34.72} / {\bfseries 0.920}  & {\bfseries 28.77} / {\bfseries 0.895} & {\bfseries 32.41} / {\bfseries 0.903} & 25.68 / 0.862 & {\bfseries 28.83} / {\bfseries 0.844}  & {\bfseries 29.95} / {\bfseries 0.831} & {\bfseries 29.01} / {\bfseries 0.865} \\ \hline
    IRCNN & {\bfseries 25.27} / 0.838	&32.21 / 0.888	&28.26 / 0.882	&31.56 / 0.889	&27.34 / 0.858	&27.88 / 0.809	&29.24 / 0.804	&28.23 / 0.834  \\ \hline
    IDBP-BM3D & {24.86} / 0.840 & 33.78 / 0.893  & 28.58 / 0.885  & 32.13 / 0.893 & 25.55 / 0.841 & 28.51 / 0.824  & 29.74 / 0.810 & 28.80 / 0.846 \\ \hline
    IDBP-CNN & 24.24 / 0.826	&32.14 / 0.881	&27.80 / 0.866	&31.22 / 0.880	&24.29 / 0.796	&27.72 / 0.803	&29.01 / 0.790	&27.98 / 0.818
 \\
    \hline
    \end{tabular}
\end{table*}

The second experiment demonstrates the performance of IDBP, P\&P and IRCNN with 80\% missing pixels, as before, but this time $\sigma_e=10$. Noisy inpainting has not been implemented yet by IPPO \cite{ram2013image}. 
The parameters of P\&P-BM3D that give us the best results are $\beta=0.8$, $\lambda=5/255$ and 150 iterations. Using the same parameter values as before deteriorates the performance significantly. For IRCNN we just update the new $\sigma_e$ in the code (our further tuning efforts have not been successful). Contrary to P\&P, in this experiment tuning the parameters of both IDBP versions can be avoided. We follow Section \ref{idbp_inpaint} and set $\delta=0$. Moreover, IDBP-BM3D now requires only 75 iterations, half the number of P\&P-BM3D. For IDBP-CNN we still use only 30 iterations (same as IRCNN).
The results are given in Table \ref{table:noisy_inpainting1}.
P\&P-BM3D is slightly inferior to IDBP-BM3D, despite having twice the number of iterations and a burdensome parameter tuning.  IDBP-CNN and IRCNN, which have similar computational cost, are highly competitive. However, note that IRCNN uses two dozen different DNNs for each inverse problem, while IDBP requires only a single DNN, as we do not modify $\delta$ between iterations.
The results for {\em house} are also presented in Fig. \ref{inpainting_example}. In this case, the BM3D-based methods obtain better visual results than their DNN-based alternatives. Furthermore, P\&P-BM3D reconstruction has slightly more artifacts than IDBP-BM3D (e.g. ringing artifacts near the right window), and IDBP-CNN result is smoother than IRCNN, yet recovers finer details (e.g. the black pipe on the roof).

\begin{figure}
\captionsetup[subfigure]{labelformat=empty}
  \centering
  \begin{subfigure}[b]{0.5\linewidth}
    \centering\includegraphics[width=120pt]{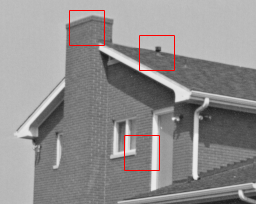}\\
\vspace{1mm}
    \centering\includegraphics[width=37.5pt]{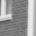}
    \centering\includegraphics[width=37.5pt]{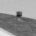}
    \centering\includegraphics[width=37.5pt]{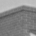}
    \caption{Original image}
\vspace{1mm}
  \end{subfigure}%
  \begin{subfigure}[b]{0.5\linewidth}
    \centering\includegraphics[width=120pt]{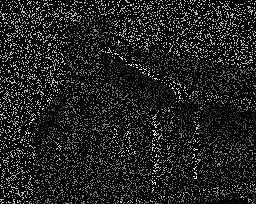}\\
\vspace{1mm}
    \centering\includegraphics[width=37.5pt]{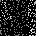}
    \centering\includegraphics[width=37.5pt]{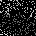}
    \centering\includegraphics[width=37.5pt]{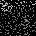}
    \caption{Subsampled and noisy image}
\vspace{1mm}
  \end{subfigure}
  \begin{subfigure}[b]{0.5\linewidth}
    \centering\includegraphics[width=120pt]{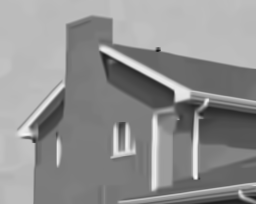}\\
\vspace{1mm}
    \centering\includegraphics[width=37.5pt]{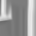}
    \centering\includegraphics[width=37.5pt]{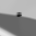}
    \centering\includegraphics[width=37.5pt]{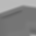}
    \caption{P\&P-BM3D (31.53 dB)}
\vspace{1mm}
  \end{subfigure}%
  \begin{subfigure}[b]{0.5\linewidth}
    \centering\includegraphics[width=120pt]{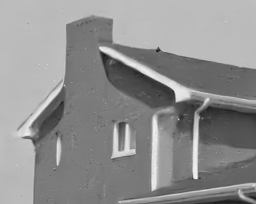}\\
\vspace{1mm}
    \centering\includegraphics[width=37.5pt]{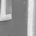}
    \centering\includegraphics[width=37.5pt]{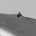}
    \centering\includegraphics[width=37.5pt]{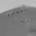}
    \caption{IRCNN (30.61 dB)}
\vspace{1mm}
  \end{subfigure}
  \begin{subfigure}[b]{0.5\linewidth}
    \centering\includegraphics[width=120pt]{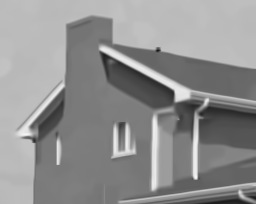}\\
\vspace{1mm}
    \centering\includegraphics[width=37.5pt]{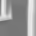}
    \centering\includegraphics[width=37.5pt]{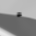}
    \centering\includegraphics[width=37.5pt]{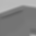}
    \caption{IDBP-BM3D (31.62 dB)}
  \end{subfigure}%
  \begin{subfigure}[b]{0.5\linewidth}
    \centering\includegraphics[width=120pt]{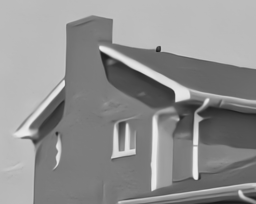}\\
\vspace{1mm}
    \centering\includegraphics[width=37.5pt]{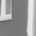}
    \centering\includegraphics[width=37.5pt]{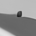}
    \centering\includegraphics[width=37.5pt]{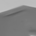}
    \caption{IDBP-CNN (31.16 dB)}
  \end{subfigure}
  \caption{Recovery of {\em house} image with 80\% missing pixels and $\sigma_e=10$. From left to right and from top to bottom: original image, subsampled and noisy image, reconstruction of P\&P-BM3D, reconstruction of IRCNN, reconstruction of the proposed IDBP-BM3D, and reconstruction of the proposed IDBP-CNN.}
\label{inpainting_example}
\end{figure}

\begin{table*}
\renewcommand{\arraystretch}{1.3}
\caption{Inpainting results (PSNR in dB / SSIM) for 80\% missing pixels and $\sigma_e=10$.} \label{table:noisy_inpainting1}
\centering
    \begin{tabular}{ | l | l | l | l | l | l | l | l | l |}
    \hline
            & {\em cameraman} & {\em house} & {\em peppers} & {\em Lena}& {\em Barbara} & {\em boat} & {\em hill} & {\em couple}  \\ \hline
    P\&P-BM3D & 24.55 / 0.785 & 31.53 / 0.848  & {27.21} / 0.827 & 30.10 / 0.833 & 24.45 / 0.735 & 27.01 / 0.731  & 27.94 / 0.706 & {\bfseries 27.23} / 0.761  \\ \hline
    IRCNN & {\bfseries 24.75} / {\bfseries 0.794}	& 30.61 / 0.845	&{\bfseries 27.25} / 0.827 	&29.94 / 0.833	&{\bfseries 25.94} / {\bfseries 0.781}	&26.86 / 0.741	&27.90 / 0.726	&26.98 / 0.757 \\ \hline
    IDBP-BM3D & {24.68} / 0.786 & {\bfseries 31.62} / 0.850  & 27.24 / 0.829  & {30.14} / 0.835 & {25.03} / 0.755 & {\bfseries 27.02} / 0.731  & {\bfseries 28.00} / 0.708  & 27.22 / 0.759 \\ \hline
    IDBP-CNN & 23.94 / 0.791	&31.16 / {\bfseries 0.851}	& {\bfseries 27.25} / {\bfseries 0.841} 	& {\bfseries 30.17} / {\bfseries 0.849}	&23.62 / 0.753	&26.95 / {\bfseries 0.756}	&27.93 / {\bfseries 0.734}	&27.04 / {\bfseries 0.773}
 \\
    \hline
    \end{tabular}
\end{table*}

We repeat the last experiment with slightly increased noise level of $\sigma_e=12$, but still use the same parameter tuning for all methods (e.g. P\&P-BM3D uses $\beta=0.8$ and $\lambda=5/255$, which are optimized for $\sigma_e=10$). This situation is often encountered in practice, when calibrating a system for all possible scenarios is impossible. The results are given in Table \ref{table:noisy_inpainting3}. In this case, IDBP-CNN is usually better than IRCNN, and IDBP-BM3D clearly outperforms P\&P-BM3D. This experiment shows another advantage of our inpainting algorithm over P\&P, as it is less sensitive to parameter tuning.
The results for {\em peppers} are presented in Fig. \ref{inpainting_example2}. This time the DNN-based methods exhibit better results. Moreover, the IDBP scheme leads to improved reconstructions for the two types of denoisers.

Next, we demonstrate the application of the methods for removal of superimposed text. Fig. \ref{inpainting_example3} displays the results for {\em Lena} with $\sigma_e=10$. IDBP-CNN outperforms the other methods in this case (e.g. it recovers the left eye better). Note that in this example the ratio of missing pixels is significantly lower than in the previous experiments. Therefore, when we repeat it without noise, all methods give very good results without noticeable visual differences. In the noiseless case, PSNR values (in dB) of 37.90, 37.12, 37.86, 37.37 are obtained for P\&P-BM3D, IRCNN, IDBP-BM3D, and IDBP-CNN, respectively.

Finally, we examine the performance of the methods on BSD68 dataset. We repeat the first three experiments with the same algorithms and settings as before. The only difference is that we reduce the ratio of missing pixels to 50\% (for 80\% missing pixels all methods perform poorly, e.g. the average SSIM is lower than 0.8 even in the noiseless case). Also, the performance of IPPO is not reported since its code does not support the dimension of BSD68 images.
The results are given in Table \ref{table:bsd68_inpainting}. As before, the IDBP-based algorithms demonstrate competitive performance in the noiseless case and improved results for the noisy settings, while maintaining implementation advantages over other methods (IDBP-BM3D requires less parameter tuning and iterations than P\&P-BM3D, and IDBP-CNN requires only a single trained DNN per scenario, compared to the 25 DNNs required by IRCNN).

\begin{table*}
\renewcommand{\arraystretch}{1.3}
\caption{Inpainting results (PSNR in dB / SSIM) for 80\% missing pixels and $\sigma_e=12$, with the same parameters of Table \ref{table:noisy_inpainting1} (tuned for $\sigma_e=10$).} \label{table:noisy_inpainting3}
\centering
    \begin{tabular}{ | l | l | l | l | l | l | l | l | l |}
    \hline
            & {\em cameraman} & {\em house} & {\em peppers} & {\em Lena} & {\em Barbara} & {\em boat} & {\em hill} & {\em couple} \\ \hline
    P\&P-BM3D & 24.43 / 0.774 & 30.78 / 0.839 & 26.56 / 0.807 & 29.47 / 0.818 & 24.12 / 0.705 & 26.53 / 0.707  & 27.44 / 0.683 & 26.71 / 0.734 \\ \hline
    IRCNN & {\bfseries 24.59} / 0.781	&30.19 / 0.835	&{26.94} / 0.813	&29.52 / 0.820	& {\bfseries 25.49} / {\bfseries 0.758}	&26.58 / 0.723	&27.55 / 0.706	&26.62 / 0.736 \\ \hline
    IDBP-BM3D & {24.51} / 0.775 & {\bfseries 31.14} / {\bfseries 0.844}  & {26.79} / 0.816  & {29.69} / 0.824 & {25.06} / 0.738 & {26.64} / 0.712  & {27.61} / 0.691  & {26.77} / 0.738 \\ \hline
    IDBP-CNN & 24.14 / {\bfseries 0.786}	&30.92 / 0.843	& {\bfseries 27.17} / {\bfseries 0.830} 	& {\bfseries 29.80} / {\bfseries 0.836}	&23.61 / 0.731	& {\bfseries 26.78} / {\bfseries 0.738}	& {\bfseries 27.70} / {\bfseries 0.714} &	{\bfseries 26.80} / {\bfseries 0.752}
 \\
    \hline
    \end{tabular}
\end{table*}

\begin{figure}
\captionsetup[subfigure]{labelformat=empty}
  \centering
  \begin{subfigure}[b]{0.5\linewidth}
    \centering\includegraphics[width=120pt]{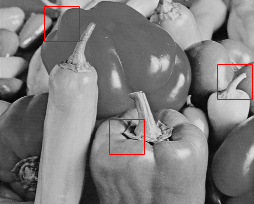}\\
\vspace{1mm}
    \centering\includegraphics[width=37.5pt]{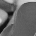}
    \centering\includegraphics[width=37.5pt]{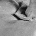}
    \centering\includegraphics[width=37.5pt]{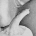}
    \caption{Original image}
    \vspace{1mm}
  \end{subfigure}%
  \begin{subfigure}[b]{0.5\linewidth}
    \centering\includegraphics[width=120pt]{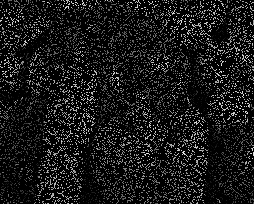}\\
\vspace{1mm}
    \centering\includegraphics[width=37.5pt]{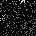}
    \centering\includegraphics[width=37.5pt]{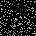}
    \centering\includegraphics[width=37.5pt]{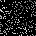}
    \caption{Subsampled and noisy image}
\vspace{1mm}
  \end{subfigure}
  \begin{subfigure}[b]{0.5\linewidth}
    \centering\includegraphics[width=120pt]{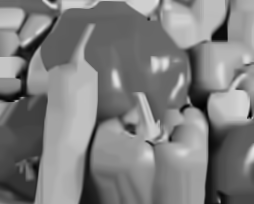}\\
\vspace{1mm}
    \centering\includegraphics[width=37.5pt]{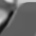}
    \centering\includegraphics[width=37.5pt]{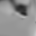}
    \centering\includegraphics[width=37.5pt]{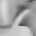}
    \caption{P\&P-BM3D (26.56 dB)}
\vspace{1mm}
  \end{subfigure}%
  \begin{subfigure}[b]{0.5\linewidth}
    \centering\includegraphics[width=120pt]{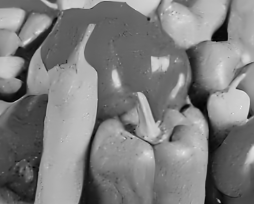}\\
\vspace{1mm}
    \centering\includegraphics[width=37.5pt]{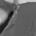}
    \centering\includegraphics[width=37.5pt]{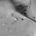}
    \centering\includegraphics[width=37.5pt]{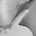}
    \caption{IRCNN (26.94 dB)}
\vspace{1mm}
  \end{subfigure}
  \begin{subfigure}[b]{0.5\linewidth}
    \centering\includegraphics[width=120pt]{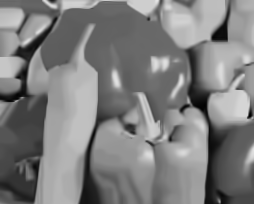}\\
\vspace{1mm}
    \centering\includegraphics[width=37.5pt]{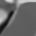}
    \centering\includegraphics[width=37.5pt]{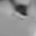}
    \centering\includegraphics[width=37.5pt]{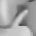}
    \caption{IDBP-BM3D (26.79 dB)}
  \end{subfigure}%
  \begin{subfigure}[b]{0.5\linewidth}
    \centering\includegraphics[width=120pt]{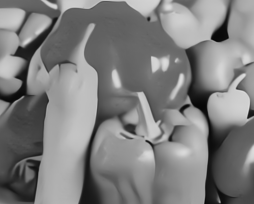}\\
\vspace{1mm}
    \centering\includegraphics[width=37.5pt]{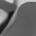}
    \centering\includegraphics[width=37.5pt]{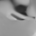}
    \centering\includegraphics[width=37.5pt]{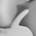}
    \caption{IDBP-CNN (27.17 dB)}
  \end{subfigure}
  \caption{Recovery of {\em peppers} image with 80\% missing pixels and $\sigma_e=12$. From left to right and from top to bottom: original image, subsampled and noisy image, reconstruction of P\&P-BM3D, reconstruction of IRCNN, reconstruction of the proposed IDBP-BM3D, and reconstruction of the proposed IDBP-CNN.}
\label{inpainting_example2}
\end{figure}

\begin{figure}
\captionsetup[subfigure]{labelformat=empty}
  \centering
  \begin{subfigure}[b]{0.5\linewidth}
    \centering\includegraphics[width=120pt]{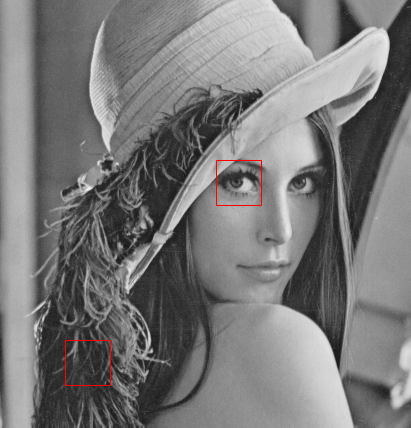}\\
\vspace{1mm}
    \centering\includegraphics[width=58pt]{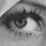}
    \centering\includegraphics[width=58pt]{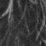}
    \caption{Original image}
\vspace{1mm}
  \end{subfigure}%
  \begin{subfigure}[b]{0.5\linewidth}
    \centering\includegraphics[width=120pt]{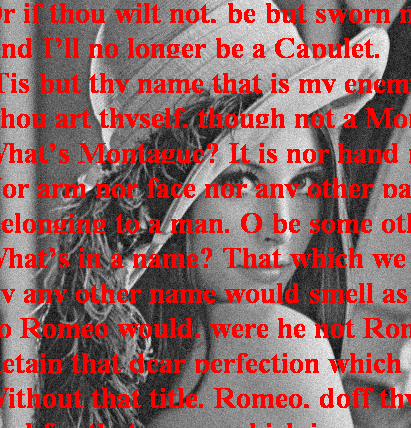}\\
\vspace{1mm}
    \centering\includegraphics[width=58pt]{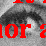}
    \centering\includegraphics[width=58pt]{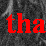}
    \caption{Degraded and noisy image}
\vspace{1mm}
  \end{subfigure}
  \begin{subfigure}[b]{0.5\linewidth}
    \centering\includegraphics[width=120pt]{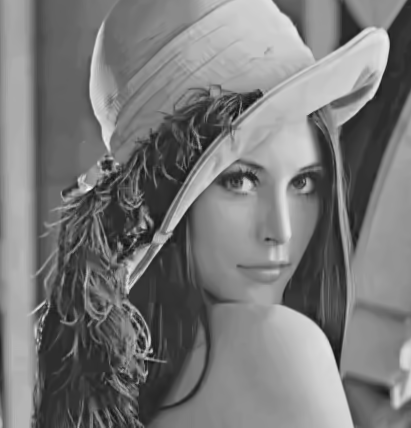}\\
\vspace{1mm}
    \centering\includegraphics[width=58pt]{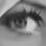}
    \centering\includegraphics[width=58pt]{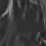}
    \caption{P\&P-BM3D (33.56 dB)}
\vspace{1mm}
  \end{subfigure}%
  \begin{subfigure}[b]{0.5\linewidth}
    \centering\includegraphics[width=120pt]{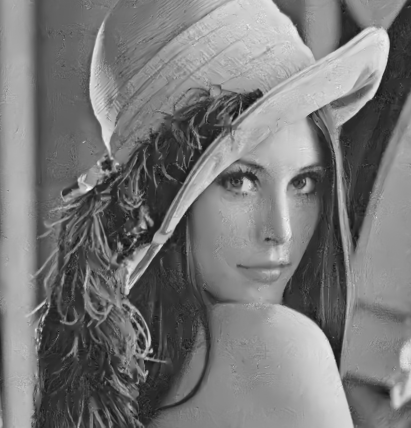}\\
\vspace{1mm}
    \centering\includegraphics[width=58pt]{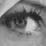}
    \centering\includegraphics[width=58pt]{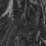}
    \caption{IRCNN (33.13 dB)}
\vspace{1mm}
  \end{subfigure}
  \begin{subfigure}[b]{0.5\linewidth}
    \centering\includegraphics[width=120pt]{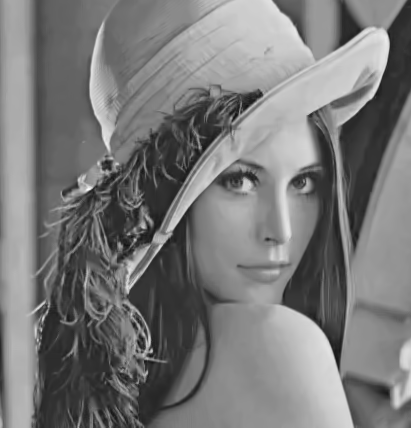}\\
\vspace{1mm}
    \centering\includegraphics[width=58pt]{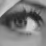}
    \centering\includegraphics[width=58pt]{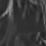}
    \caption{IDBP-BM3D (33.62 dB)}
  \end{subfigure}%
  \begin{subfigure}[b]{0.5\linewidth}
    \centering\includegraphics[width=120pt]{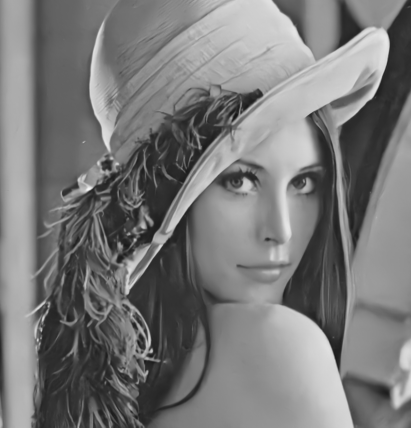}\\
\vspace{1mm}
    \centering\includegraphics[width=58pt]{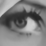}
    \centering\includegraphics[width=58pt]{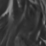}
    \caption{IDBP-CNN (33.94 dB)}
  \end{subfigure}
  \caption{Recovery of {\em Lena} image with superimposed text and $\sigma_e=10$. From left to right and from top to bottom: original image, degraded and noisy image, reconstruction of P\&P-BM3D, reconstruction of IRCNN, reconstruction of the proposed IDBP-BM3D, and reconstruction of the proposed IDBP-CNN.}
\label{inpainting_example3}
\end{figure}

\begin{table}
\renewcommand{\arraystretch}{1.3}
\caption{Average inpainting results (PSNR in dB / SSIM) for 50\% missing pixels on BSD68 dataset, and run-time (per image) on Intel i7-7500U CPU @ 2.70 GHz.} \label{table:bsd68_inpainting}
\centering
    \begin{tabular}{ | l | l | l | l | l |}
    \hline
            & $\sigma_e=0$ & $\sigma_e=10$ & $\sigma_e=12$ & Time \\ \hline
    P\&P-BM3D & 31.37 / 0.926 & 28.69 / 0.821 & 28.07 / 0.791 &  266s \\ \hline
    IRCNN & {\bfseries 31.91} / {\bfseries 0.929} & 29.05 / 0.838 & 28.51 / 0.817 & 34s  \\ \hline
    IDBP-BM3D & 31.18 / 0.918 & 28.78 / 0.823 & 28.29 / 0.802 & 179s \\ \hline
    IDBP-CNN & 31.21 / 0.915 & {\bfseries 29.26} / {\bfseries 0.846} & {\bfseries 28.77} / {\bfseries 0.824} & 33s \\ \hline
    \end{tabular}
\end{table}

\subsection{Image deblurring}
\label{exp_deblurring}

In the image deblurring problem, for a circular shift-invariant blur operator, 
both $\check{\x}_k$ in P\&P and $\tilde{\y}_k$ in IDBP can be efficiently implemented using Fast Fourier Transform (FFT). 
We use trivial initialization for all methods, i.e. $\check{\v}_0=\y$ in P\&P and $\tilde{\y}_0=\y$ in IDBP. The computational cost of each iteration of P\&P-BM3D and IDBP-BM3D is of the same scale, dominated by the complexity of the denoising operation.\footnote{Yet, since $\tilde{\y}_0=\y$ and $\mathcal{D}(\y;\sigma) \approx \y$, the denoising operation in IDBP first iteration can even be spared and replaced by $\tilde{\x}_1 = \y$.}  
Therefore, similarly to inpainting, the overall complexity of P\&P-BM3D and IDBP-BM3D is determined by the number of iterations of each strategy. 
The same goes for IRCNN and IDBP-CNN.

We consider four deblurring scenarios used as benchmarks in many publications (e.g. \cite{guerrero2008image, danielyan2012bm3d}). The blur kernel $h(x_1,x_2)$ and noise level of each scenario are summarized in Table \ref{table:blur_kernels}. The kernels are normalized such that $\sum_{x_1,x_2} h(x_1,x_2) = 1$.

\begin{table}
\renewcommand{\arraystretch}{1.3}
\caption{Blur kernel and noise variance of different scenarios.} \label{table:blur_kernels}
\centering
    \begin{tabular}{ | l | l | l |}
    \hline
    Scenario   & $h(x_1,x_2)$ & $\sigma_e^2$  \\ \hline
    1 & $1/(x_1^2+x_2^2),\,\, x_1, x_2 = -7,\ldots,7$ & 2  \\ \hline
    2 & $1/(x_1^2+x_2^2),\,\, x_1, x_2 = -7,\ldots,7$ & 8  \\ \hline
    3 & $9 \times 9$ uniform & $\approx$ 0.3  \\ \hline
    4 & $[1,4,6,4,1]^T[1,4,6,4,1]/256$ & 49  \\ \hline
    \end{tabular}
\end{table}

We compare the performance of IDBP, P\&P and IRCNN with IDD-BM3D \cite{danielyan2012bm3d}, which is a state-of-the-art algorithm specifically designed for deblurring. We use IDD-BM3D exactly as in \cite{danielyan2012bm3d}\footnote{Downloaded from \url{http://www.cs.tut.fi/~foi/GCF-BM3D/BM3D.zip}.}, where the same scenarios are examined: it is initialized using BM3D-DEB \cite{dabov2008image}, performs 200 iterations and its parameters are manually tuned per scenario. The parameters of P\&P-BM3D are also optimized for each scenario. It uses 50 iterations and $\beta=\{$0.85, 0.85, 0.9, 0.8$\}$ and $\lambda=\{$2, 1, 3, 1$\}/255$, for scenarios 1-4, respectively.
For IRCNN we use the exact code supplied by the authors, which performs 30 iterations and does not use the DNNs which are associated with low noise levels. 
Our efforts to use also the low noise level DNNs in IRCNN have not been successful.

We examine two different tuning strategies for IDBP. The first one is a manual tuning per scenario, which is simpler than the tuning of the competing methods. For IDBP-BM3D, we fix $\delta=5$ for all scenarios and only change $\epsilon$ to $\{$7e-3, 4e-3, 8e-3, 2e-3$\}$ for scenarios 1-4, respectively. For IDBP-CNN, we fix $\delta=10$ \footnote{We increase $\delta$ to avoid using DNNs associated with low noise levels, similar to IRCNN, as such DNNs lead to inferior deblurring results.} for all scenarios and only change $\epsilon$ to $\{$4e-3, 2e-3, 3e-3, 0.8e-3$\}$ for scenarios 1-4, respectively.
The second strategy applies Algorithm \ref{Auto_IDBP_alg} with the suggested default settings for IDBP-BM3D, and with a minor change of setting, i.e. $\delta=10$ and $\tau=4$, for IDBP-CNN. In both cases, the multiplication $\epsilon \cdot \sigma_e^2$ in (\ref{Eq_Hinv_approx}) is kept above 5e-4, to prevent numerical complications for $\sigma_e \ll 1$. 
Note that in the automatic-tuning scheme, $\epsilon$ can be set differently for different images in the same scenario, while all images, in all scenarios, use the same method with the same default parameters. 
We use a stopping criterion of only 30 iterations for all IDBP versions (which is equal to the number of iterations of IRCNN and lower than P\&P).
 
Table \ref{table:deblurring_S1to4} shows the results of the different methods. For each scenario it shows the input PSNR (i.e. PSNR of $\y$) and the BSNR (blurred signal-to-noise-ratio, defined as $\mathrm{var}(\H\x)/m\sigma_e^2$) for each image, as well as the ISNR\footnote{ISNR equals the difference between the PSNR of the reconstruction and the input PSNR.} (improvement signal-to-noise-ratio) and SSIM for each method and image. Note that in Scenario 3, $\sigma_e^2$ is set slightly different for each image, ensuring that the BSNR is 40 dB. 
In Fig. \ref{fig:deblurring_PSNR_vs_iter} we also present the PSNR, averaged over all scenarios and images, as a function of the iteration number for the denoising-based techniques (i.e. P\&P, IRCNN and IDBP). It can be seen that the IDBP optimization scheme is faster than P\&P (which is based on variable splitting and ADMM) and IRCNN (which is based on variable splitting and quadratic penalty method).

From Table \ref{table:deblurring_S1to4} it is clear that IDBP's plain and auto-tuned implementations have similar performance on average. Moreover, note that in every scenario at least one of the two leading methods (on average) is obtained by an IDBP implementation. In fact, when averaging over all scenarios IDBP-CNN has the highest ISNR.

\begin{figure}
    \centering
    \includegraphics[width=0.39\textwidth]{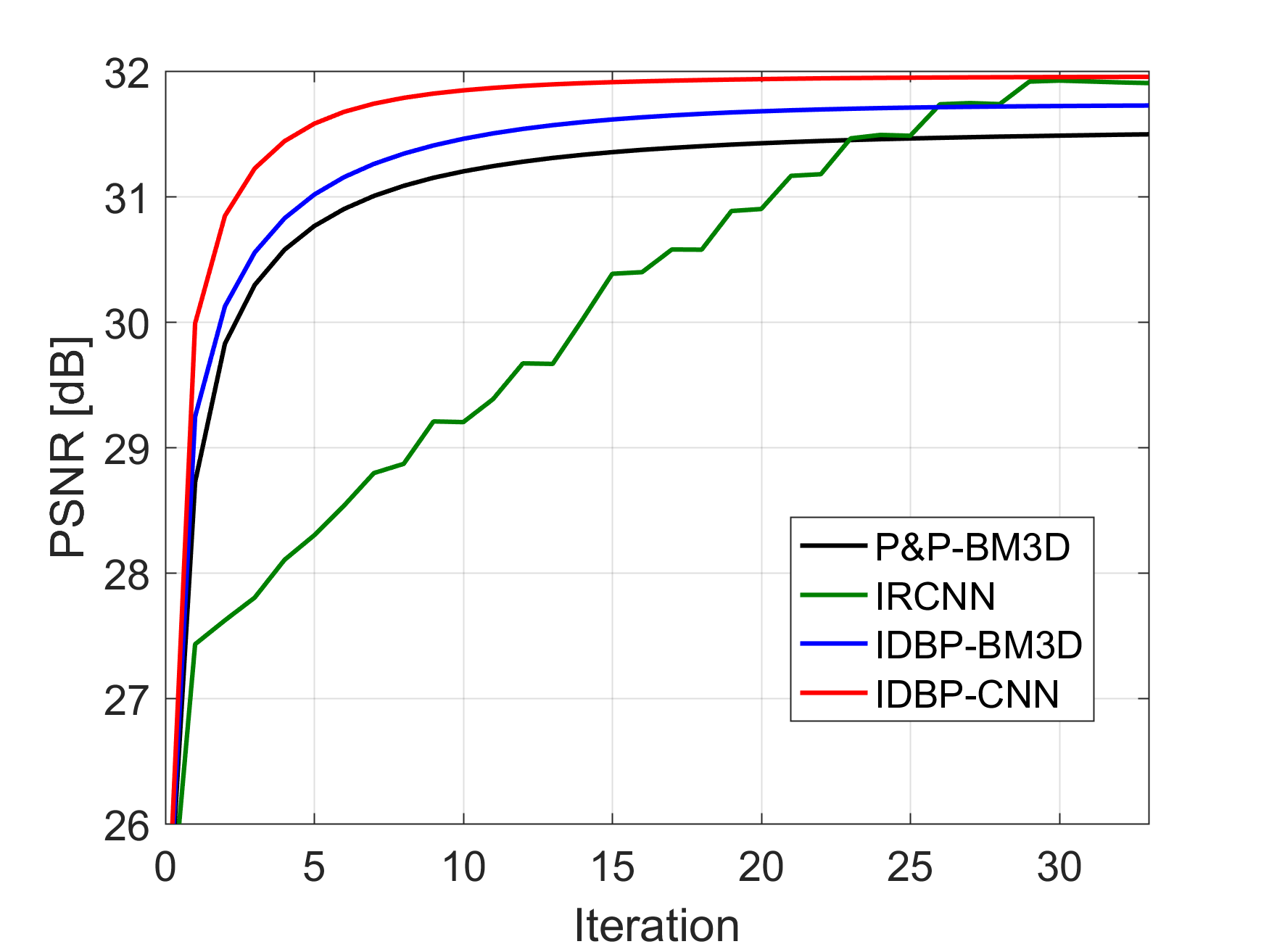}
    \caption{Deblurring results (PSNR averaged over all scenarios and images in Table \ref{table:deblurring_S1to4} vs. iteration number) for different denoising-based methods.} \label{fig:deblurring_PSNR_vs_iter}
\end{figure}

Focusing on the methods that use the BM3D prior, both IDBP-BM3D versions perform better than P\&P-BM3D, and have only a small performance gap below IDD-BM3D, which is especially tailored for the deblurring problem and requires many more iterations and parameter tuning.
Fig. \ref{deblurring_example} displays the results of the BM3D-based methods for {\em Barbara} in Scenario 4.
It can be seen that IDBP-BM3D reconstruction, especially with auto-tuning in this case, restores the texture better (e.g. look near the right palm).

Focusing on the methods that use DNN prior, these three methods are highly competitive. However, we remind the reader that IRCNN uses about two dozen different DNNs for each inverse problem, while IDBP requires only a single DNN per scenario, as we do not modify $\delta$ between iterations.
Fig. \ref{deblurring_example2} displays the results of IRCNN and IDBP-CNN for {\em Lena} in Scenario 1.
It can be seen that IDBP-CNN recovers more fine details of the hat.
Fig. \ref{deblurring_example3} displays the results of IRCNN and IDBP-CNN for {\em Cameraman} in Scenario 3.
In this case, IDBP-CNN recovers the fingers better. 
We do not show the results of auto-tuned IDBP-CNN, since there is no visual difference compared to its plain counterpart.

We examine the performance of the methods on BSD68 dataset as well.
The results are given in Table \ref{table:bsd68_deblurring}. The observations in the above three paragraphs stay the same on this dataset. IDBP's plain and auto-tuned implementations exhibit (similar) impressive performance. Again, when averaging over all scenarios IDBP-CNN has the highest PSNR. 

For more examples, we refer the reader to a short conference version of this paper \cite{tirer2018icip}, where we demonstrate the advantages of using IDBP with an automatic parameter tuning for image deblurring when only an inexact estimate of the blur kernel is available, and per-scenario tuning is impossible.
This situation is encountered in most blind-deblurring methods, which start with estimating only the kernel, and then use it to recover the latent image via non-blind deblurring.

\begin{table*}[t]
\scriptsize
\renewcommand{\arraystretch}{1.3}
\caption{Deblurring inputs (BSNR and input PSNR in dB) and reconstruction results (Improvement SNR in dB / SSIM for each method) for scenarios 1-4.}
\label{table:deblurring_S1to4}
\centering
    \begin{tabular}{|p{2.503cm}|p{1.29cm}|p{1.29cm}|p{1.29cm}|p{1.21cm}|p{1.21cm}|p{1.21cm}|p{1.21cm}|p{1.29cm}|p{1.21cm}|}
    \hline
    Scenario 1  & {\em cameraman} & {\em house} & {\em peppers} & {\em Lena} & {\em Barbara} & {\em boat} & {\em hill} & {\em couple} & Average \\ \hline
    BSNR & 31.87 & 29.16  & 29.99 & 29.89 & 30.81 & 29.37 & 30.19 & 28.81 &\\ \hline
    input PSNR & 22.23 & 25.61  & 22.60 & 27.25 & 23.34 & 25.00 & 26.51 & 24.87 & \\ \hline
    \hline
    IDD-BM3D & {8.86} / 0.886 & {\bfseries 9.95} / 0.891  & {10.46} / 0.918 & 7.97 / 0.902 & 7.64 / 0.897 & {7.68} / 0.870 & {6.03} / {\bfseries 0.859} & {7.61} / 0.889 & 8.28 / 0.889 \\ \hline
    P\&P-BM3D & 8.03 / 0.883 & 9.74 / 0.890  & 10.02 / 0.921 & {8.02} / 0.909 & 6.84 / 0.890 & 7.48 / 0.870 & 5.78 / 0.855 & 7.34 / 0.886 & 7.91 / 0.888 \\ \hline
    IRCNN & {\bfseries 9.08} / 0.894 	&9.69 / 0.884	&10.83 / 0.923	&8.06 / 0.906	&7.54 / 0.898	&7.71 / 0.867	&5.88 / 0.847	&7.64 / 0.882  & 8.30 / 0.888 \\ \hline
    IDBP-BM3D & 8.51 / 0.893 & 9.82 / 0.891  & 10.07 / 0.920  & 7.92 / 0.909 & {\bfseries 7.90} / {\bfseries 0.906} & 7.54 / 0.871 & 5.90 / 0.854 & 7.34 / 0.885 & 8.13 / 0.891 \\ \hline 
    Auto-tuned IDBP-BM3D & 8.40 / 0.890 & 9.83 / 0.890  & 10.06 / 0.920  & {8.02} / 0.910 & 7.59 / 0.901 & 7.61 / 0.870 & 5.90 / 0.852 & 7.46 / 0.885 & 8.11 / 0.890 \\  \hline 
    IDBP-CNN & {\bfseries 9.08} / {\bfseries 0.897} 	&9.93 / {\bfseries 0.892}	& {\bfseries 10.97} / {\bfseries 0.926}	&8.24 / {\bfseries 0.911}	&6.89 / 0.895	& {\bfseries 7.81} / {\bfseries 0.873}	& {\bfseries 6.04} / {\bfseries 0.859}	& {\bfseries 7.75} / {\bfseries 0.890} & {\bfseries 8.34} / {\bfseries 0.893} \\ \hline 
    Auto-tuned IDBP-CNN & 9.07 / {\bfseries 0.897}	&9.92 / {\bfseries 0.892}	& {\bfseries 10.97} / {\bfseries 0.926}	& {\bfseries 8.25} / {\bfseries 0.911}	&6.84 / 0.894	& {\bfseries 7.81} / {\bfseries 0.873}	& {\bfseries 6.04} / {\bfseries 0.859}	& {\bfseries 7.75} / {\bfseries 0.890} & 8.33 / {\bfseries 0.893} \\  \hline %
    \hline
    \end{tabular}
\centering
    \begin{tabular}{|p{2.503cm}|p{1.29cm}|p{1.29cm}|p{1.29cm}|p{1.21cm}|p{1.21cm}|p{1.21cm}|p{1.21cm}|p{1.29cm}|p{1.21cm}|}
    \hline
    Scenario 2  & {\em cameraman} & {\em house} & {\em peppers} & {\em Lena} & {\em Barbara} & {\em boat} & {\em hill} & {\em couple} & Average  \\ \hline
    BSNR & 25.85 & 23.14  & 23.97 & 23.87 & 24.79 & 23.35 & 24.17 & 22.79 &\\ \hline
    input PSNR & 22.16 & 25.46  & 22.53 & 27.04 & 23.25 & 24.88 & 26.33 & 24.75 &\\ \hline
    \hline
    IDD-BM3D & {7.12} / 0.856 & {8.55} / 0.872  & {8.65} / 0.894 & {6.61} / 0.881 & {3.96} / 0.822 & {5.96} / 0.832 &  {\bfseries 4.69} / {\bfseries 0.813} & {5.88} / 0.847 & 6.43 / 0.852 \\ \hline
    P\&P-BM3D & 6.06 / 0.842 & 8.20 / 0.866 & 8.15 / 0.894 & 6.49 / 0.883 & 2.72 / 0.788 & 5.65 / 0.828 & 4.46 / 0.809 & 5.56 / 0.841 & 5.91 / 0.844 \\ \hline 
    IRCNN & {\bfseries 7.33} / {\bfseries 0.867}	& {\bfseries 8.63} / {\bfseries 0.876}	& {\bfseries 9.09} / {\bfseries 0.901}	& {\bfseries 6.79} / {\bfseries 0.888}	& {\bfseries 4.68} / {\bfseries 0.841}	& {\bfseries 6.11} / {\bfseries 0.836}	&4.62 / {\bfseries 0.813}	& {\bfseries 6.06} / {\bfseries 0.851} & {\bfseries 6.66} / {\bfseries 0.859} \\ \hline
    IDBP-BM3D & 6.61 / 0.858 & 8.15 / 0.863  & 7.97 / 0.890  & 6.58 / {\bfseries 0.888} & 3.94 / 0.830 & 5.87 / 0.835 & 4.61 / 0.812 & 5.71 / 0.846 & 6.18 / 0.853 \\ \hline 
    Auto-tuned IDBP-BM3D & 6.56 / 0.858 & 8.15 / 0.863  & 8.00 / 0.892  & 6.54 / 0.887 & 3.94 / 0.830 & 5.91 / 0.835 & 4.61 / 0.812 & 5.77 / 0.846 & 6.19 / 0.853 \\  \hline 
    IDBP-CNN & 7.28 / 0.866	&8.45 / 0.874	&8.96 / 0.898	&6.64 / 0.886	&4.41 / 0.838	&5.97 / 0.832	&4.43 / 0.808	&5.91 / 0.848 & 6.51 / 0.856 \\ \hline 
    Auto-tuned IDBP-CNN & 7.23 / 0.864 	&8.53 / 0.874	&9.04 / {\bfseries 0.901}	&6.71 / 0.887	&4.01 / 0.829	&6.04 / 0.834	&4.52 / 0.810	&5.97 / 0.849 & 6.51 / 0.856 \\  \hline %
    \hline
    \end{tabular}
\centering
    \begin{tabular}{|p{2.503cm}|p{1.29cm}|p{1.29cm}|p{1.29cm}|p{1.21cm}|p{1.21cm}|p{1.21cm}|p{1.21cm}|p{1.29cm}|p{1.21cm}|}
    \hline
    Scenario 3  & {\em camera.} & {\em house} & {\em peppers} & {\em Lena} & {\em Barbara} & {\em boat} & {\em hill} & {\em couple} & Average  \\ \hline
    BSNR & 40.00 & 40.00  & 40.00 & 40.00 & 40.00 & 40.00 & 40.00 & 40.00 &\\ \hline
    input PSNR & 20.77 & 24.11  & 21.33 & 25.84 & 22.49 & 23.36 & 25.04 & 23.24 &\\ \hline
    \hline
    IDD-BM3D & {10.45} / 0.895 & 12.89 / 0.920  & {12.06} / 0.922 & 8.91 / 0.900 & 6.05 / 0.847 & {\bfseries 9.77} / 0.880 &  {\bfseries 7.78} / {\bfseries 0.868} & {\bfseries 10.06} / 0.906 & {\bfseries 9.75} / 0.892 \\ \hline
    P\&P-BM3D & 9.49 / 0.894 & {\bfseries 13.17} / {\bfseries 0.930}  & 11.70 / {\bfseries 0.926} & 9.04 / {\bfseries 0.908} & 5.36 / 0.830 & 9.71 / {\bfseries 0.883} & 7.63 / 0.867 & 9.98 / {\bfseries 0.909} & 9.51 / 0.893 \\ \hline
    IRCNN & 10.30 / 0.887	&11.58 / 0.886	&12.03 / 0.920	&8.88 / 0.899	&5.92 / 0.841	&9.36 / 0.864	&7.22 / 0.841	&9.48 / 0.883 & 9.35 / 0.878 \\ \hline
    IDBP-BM3D & 9.78 / {\bfseries 0.898} & 12.96 / 0.928  & 11.92 / 0.925  & 9.03 / 0.906 & {6.22} / 0.855 & 9.64 / 0.880 & 7.66 / 0.863 & 9.85 / 0.905 & 9.63 / {\bfseries 0.895} \\ \hline 
    Auto-tuned IDBP-BM3D & 9.67 / 0.895 & 12.96 / 0.927  & 11.90 / 0.925  & {\bfseries 9.07} / 0.906 & 6.01 / 0.848 & 9.74 / 0.879 & 7.67 / 0.862 & 9.98 / 0.904 & 9.63 / 0.893 \\  \hline 
    IDBP-CNN & {\bfseries 10.55} / 0.896	&11.91 / 0.894	&{\bfseries 12.33} / 0.925	 &9.05 / 0.904	&6.07 / 0.856	&9.63 / 0.874	&7.49 / 0.856	&9.91 / 0.897 & 9.62 / 0.888  \\ \hline 
    Auto-tuned IDBP-CNN & 10.54 / 0.896	&11.91 / 0.895	&12.30 / 0.924	&8.99 / 0.903	& {\bfseries 6.26} / {\bfseries 0.861}	&9.62 / 0.874	&7.37 / 0.853	&9.93 / 0.898 & 9.62 / 0.888 \\  \hline %
    \hline
    \end{tabular}
\centering
    \begin{tabular}{|p{2.503cm}|p{1.29cm}|p{1.29cm}|p{1.29cm}|p{1.21cm}|p{1.21cm}|p{1.21cm}|p{1.21cm}|p{1.29cm}|p{1.21cm}|}
    \hline
    Scenario 4  & {\em cameraman} & {\em house} & {\em peppers} & {\em Lena} & {\em Barbara} & {\em boat} & {\em hill} & {\em couple} & Average  \\ \hline
    BSNR & 18.53 & 15.99  & 17.01 & 16.47 & 17.35 & 16.06 & 16.68 & 15.55 &\\ \hline
    input PSNR & 24.62 & 28.06  & 24.77 & 28.81 & 24.22 & 27.10 & 27.74 & 26.94 &\\ \hline
    \hline
    IDD-BM3D & {3.98} / 0.853 & {5.79} / 0.870  & 4.45 / 0.879 & 4.97 / 0.883 & 1.88 / 0.801 & {3.60} / 0.836 &  {\bfseries 3.29} / {\bfseries 0.818} & {3.61} / 0.849 & 3.95 / 0.849 \\ \hline
    P\&P-BM3D & 3.31 / 0.842 & 5.43 / 0.863  & {4.95} / 0.887 & 4.84 / 0.884 & 1.50 / 0.787 & 3.42 / 0.833 & 3.13 / 0.817 & 3.39 / 0.845 & 3.75 / 0.845 \\ \hline
    IRCNN & {\bfseries 4.29} /  {\bfseries 0.862}	& {\bfseries 6.05} / {\bfseries 0.875}	& {\bfseries 6.66} / {\bfseries 0.902}	&{\bfseries 5.13} / 0.889	&1.82 / 0.802	& {\bfseries 3.84} / {\bfseries 0.838}	&3.26 / {\bfseries 0.818}	& {\bfseries 3.74} / {\bfseries 0.850} & {\bfseries 4.35} / {\bfseries 0.855}  \\ \hline
    IDBP-BM3D & 3.61 / 0.854 & 5.69 / 0.871  & 4.44 / 0.884  & {5.07} / {\bfseries 0.891} & 1.97 / 0.809 & 3.54 / 0.834 & 3.12 / 0.809 & 3.50 / 0.845 & 3.87 / 0.850 \\ \hline 
    Auto-tuned IDBP-BM3D & 3.65 / 0.848 & 5.42 / 0.863  & 4.36 / 0.877  & 4.94 / 0.888 & {\bfseries 2.72} / {\bfseries 0.830}  & 3.52 / 0.834 & 3.15 / 0.811 & 3.41 / 0.844 & 3.90 / 0.849 \\  \hline 
    IDBP-CNN & 4.25 / 0.860	&5.85 / 0.871	&6.29 / 0.900	&5.05 / 0.889	&2.40 / 0.819	&3.68 / 0.831	&3.19 / 0.808	&3.63 / 0.841 & 4.29 / 0.852  \\ \hline 
    Auto-tuned IDBP-CNN & 4.20 / 0.858	&5.85 / 0.870	&6.28 / 0.899	&5.04 / 0.888	&2.31 / 0.816	&3.67 / 0.829	&3.17 / 0.805	&3.59 / 0.839 & 4.26 / 0.851  \\  \hline %
    \end{tabular}
\end{table*}

\begin{figure}
\captionsetup[subfigure]{labelformat=empty}
  \centering
  \begin{subfigure}[b]{1\linewidth}
    \centering\includegraphics[width=150pt]{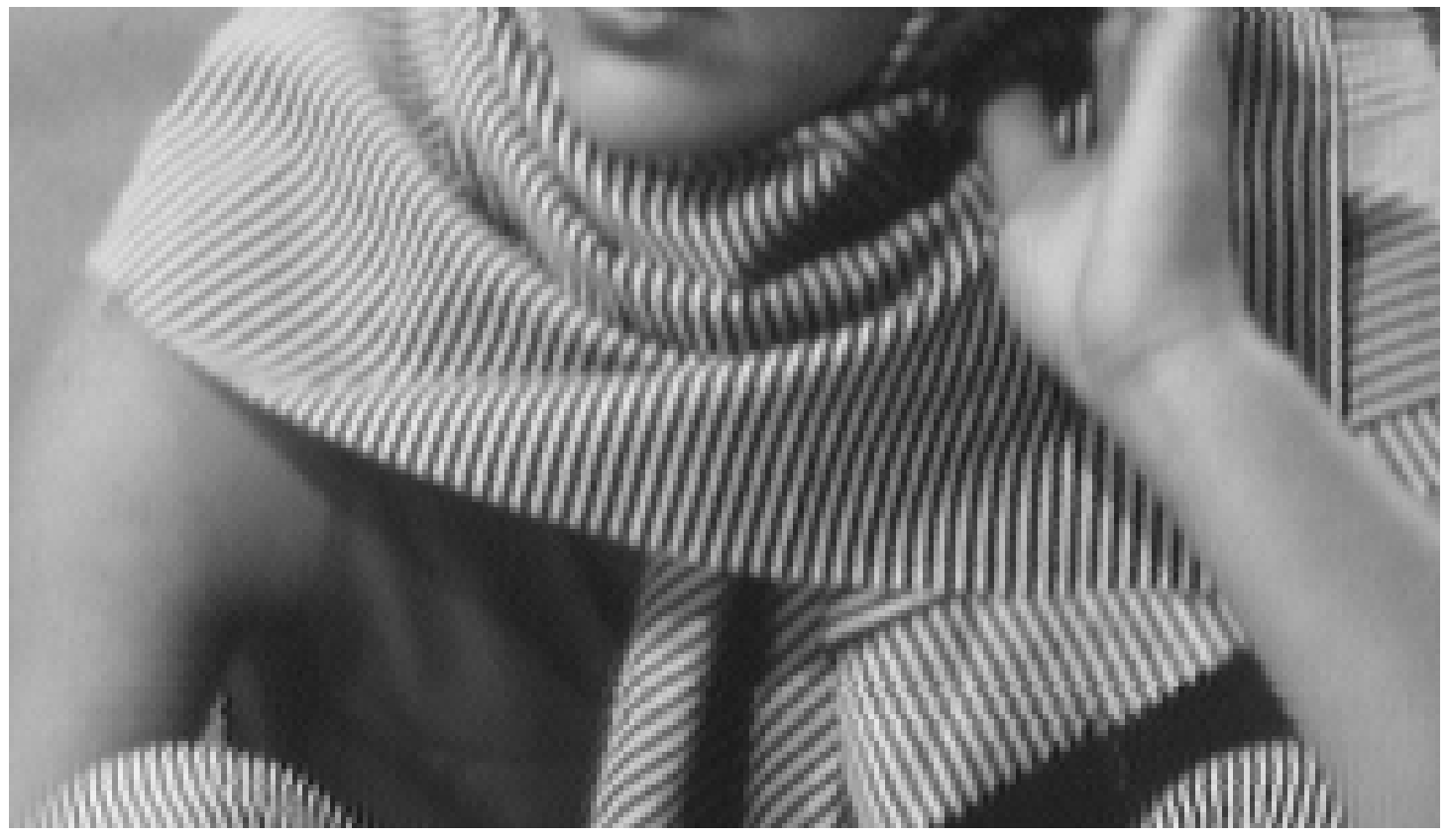}
    \centering\includegraphics[width=88pt]{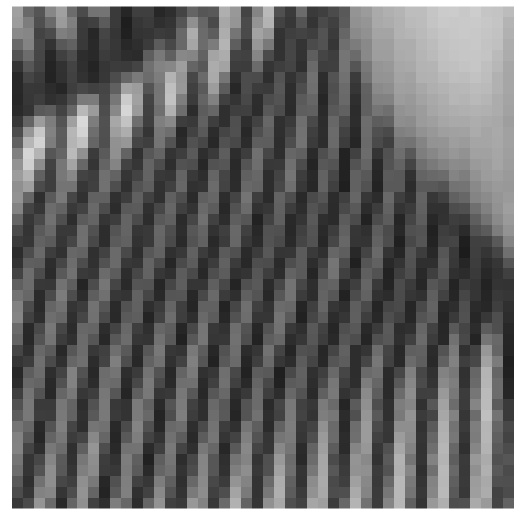}\\
    \caption{Original image}
    \vspace{1mm}
  \end{subfigure}
\\
  \begin{subfigure}[b]{1\linewidth}
    \centering\includegraphics[width=150pt]{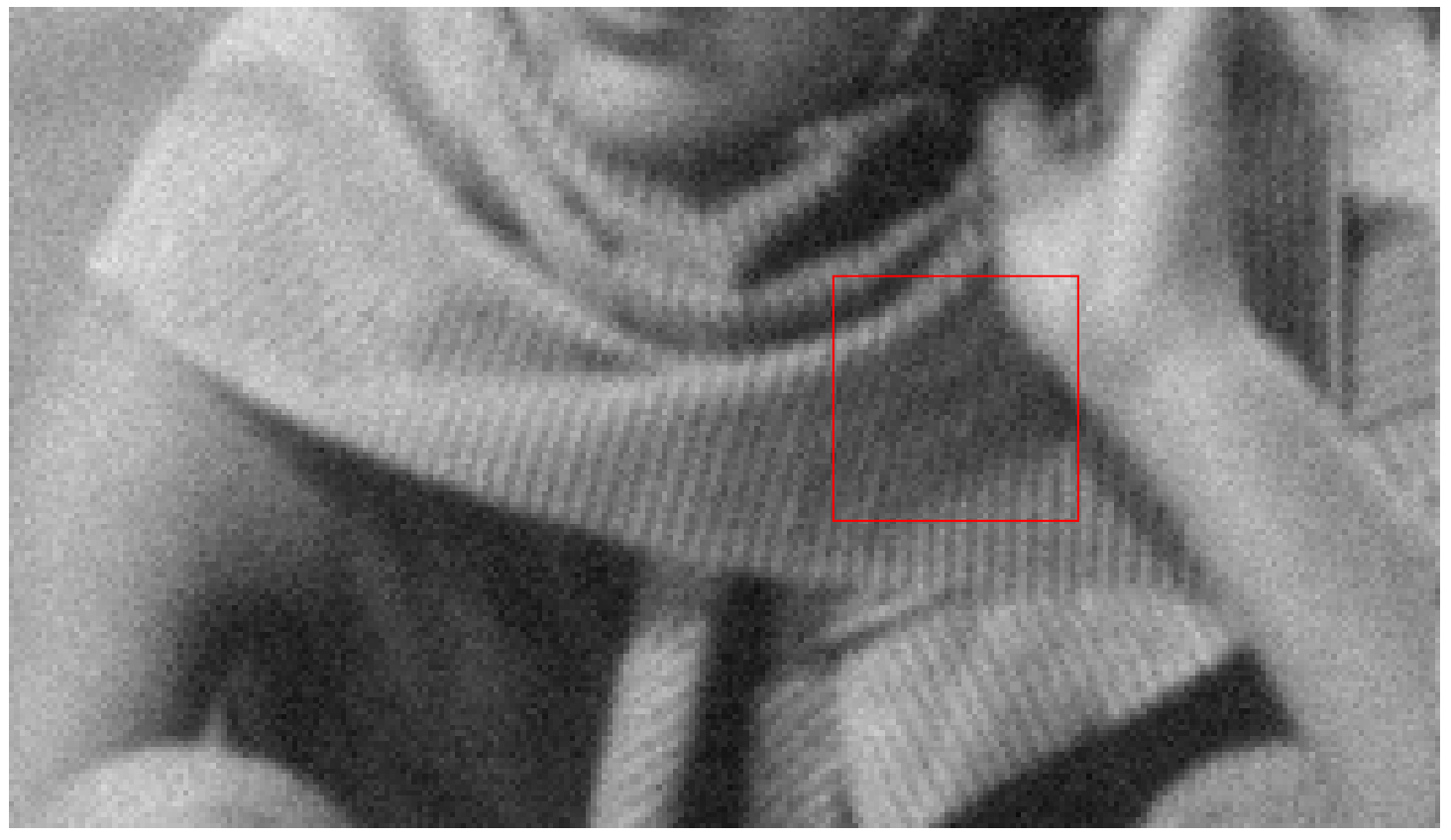}
    \centering\includegraphics[width=88pt]{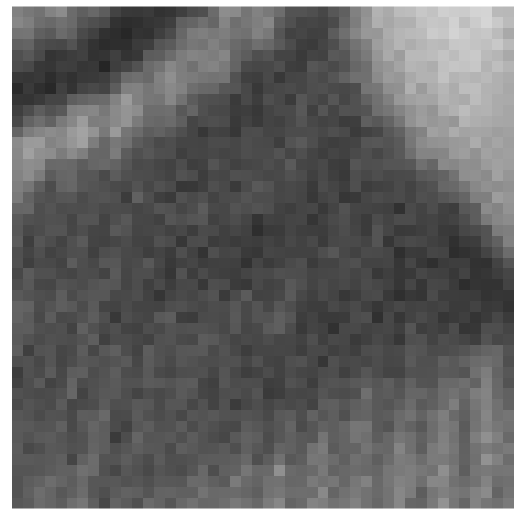}\\
    \caption{Blurred and noisy image}
    \vspace{1mm}
  \end{subfigure}
\\
  \begin{subfigure}[b]{1\linewidth}
    \centering\includegraphics[width=150pt]{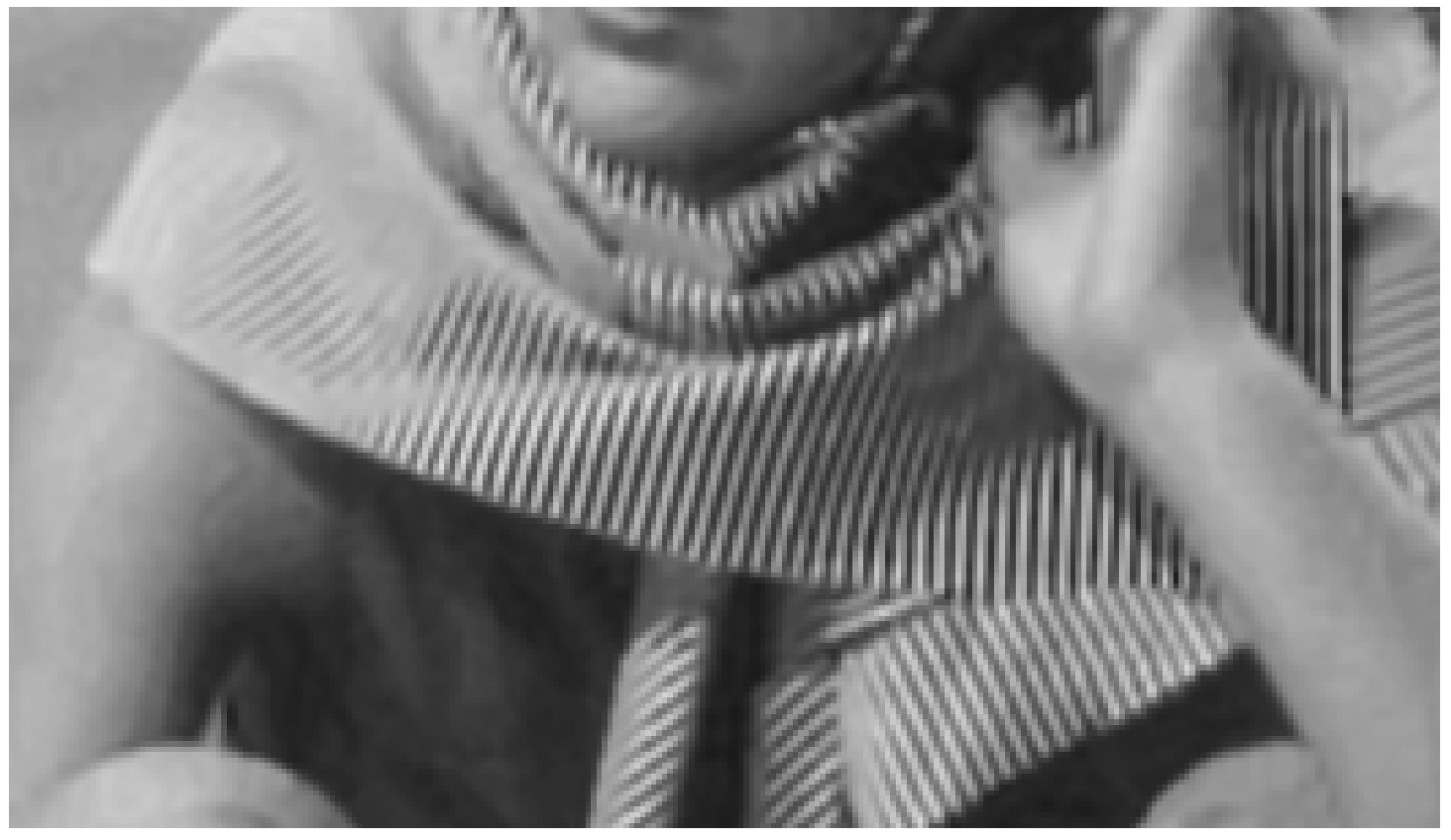}
    \centering\includegraphics[width=88pt]{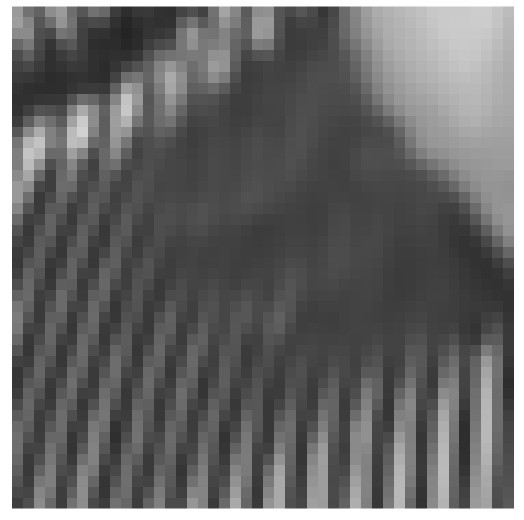}\\
    \caption{IDD-BM3D (26.10 dB)}
    \vspace{1mm}
  \end{subfigure}
\\
  \begin{subfigure}[b]{1\linewidth}
    \centering\includegraphics[width=150pt]{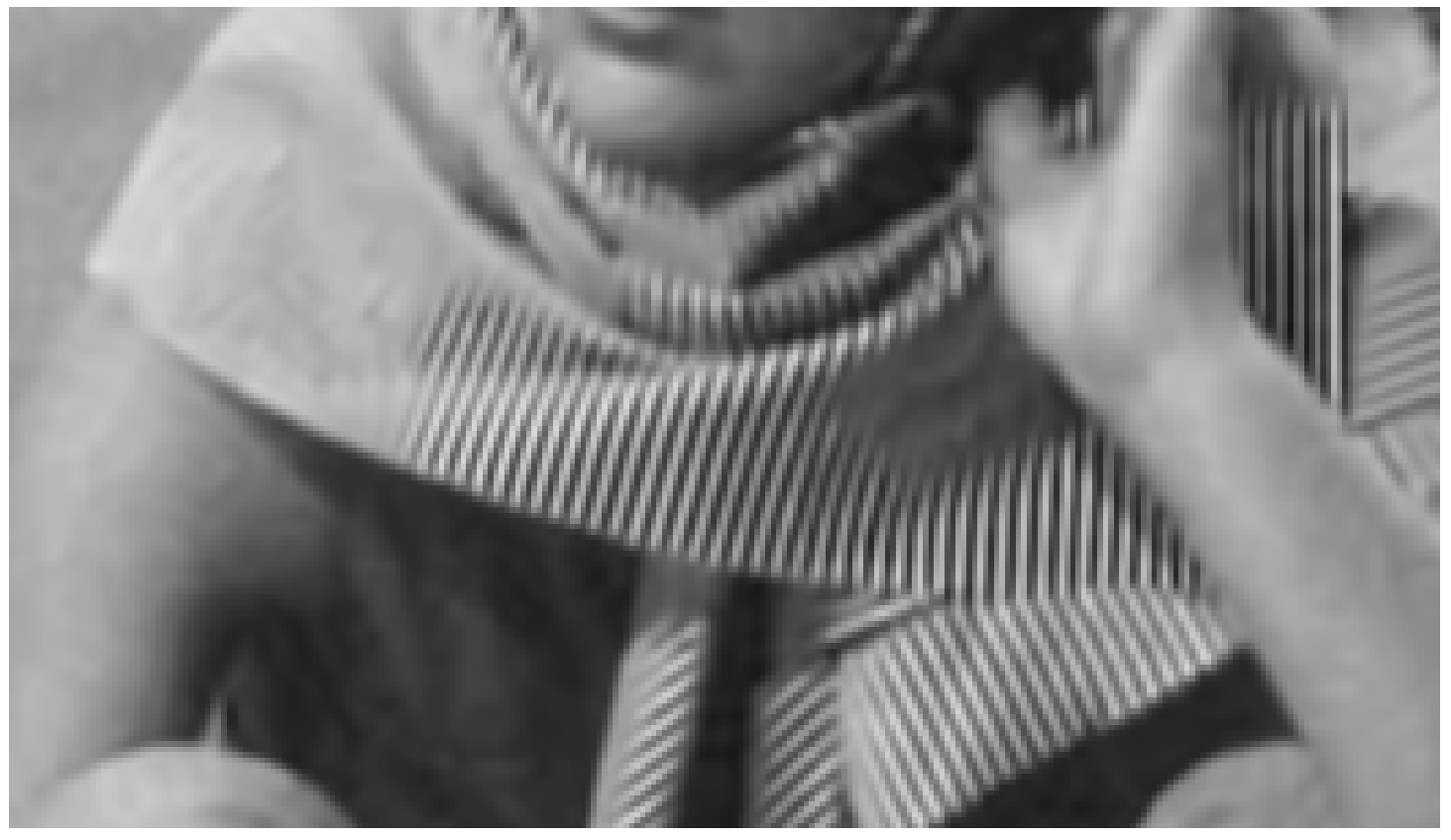}
    \centering\includegraphics[width=88pt]{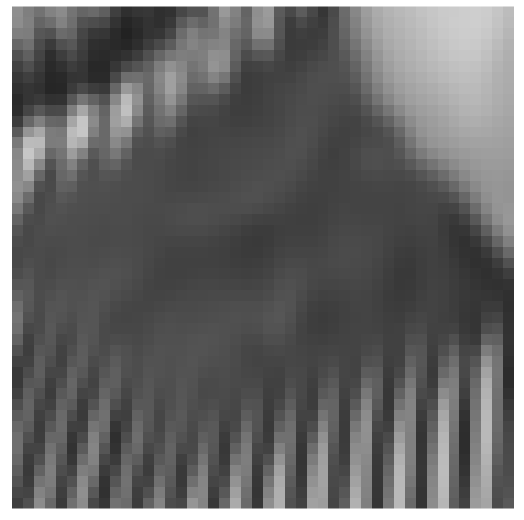}\\
    \caption{P\&P-BM3D (25.72 dB)}
    \vspace{1mm}
  \end{subfigure}
\\
  \begin{subfigure}[b]{1\linewidth}
    \centering\includegraphics[width=150pt]{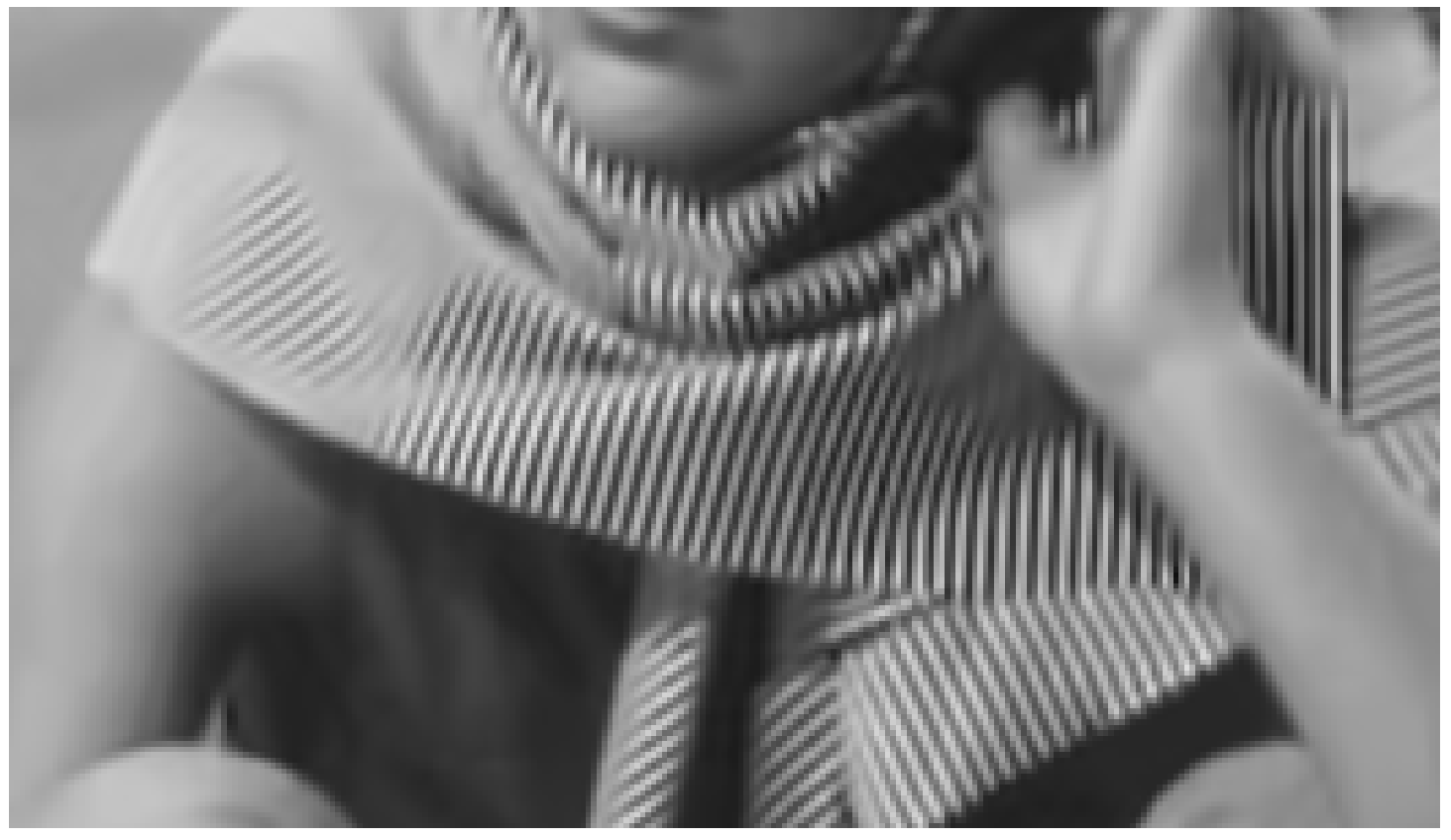}
    \centering\includegraphics[width=88pt]{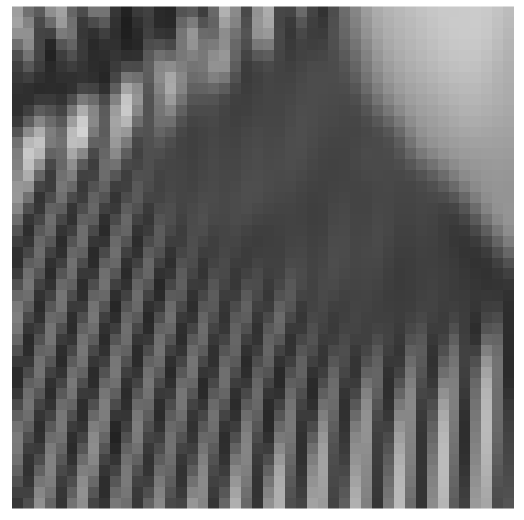}\\
    \caption{IDBP-BM3D (26.19 dB)}
    \vspace{1mm}
  \end{subfigure}
\\
  \begin{subfigure}[b]{1\linewidth}
    \centering\includegraphics[width=150pt]{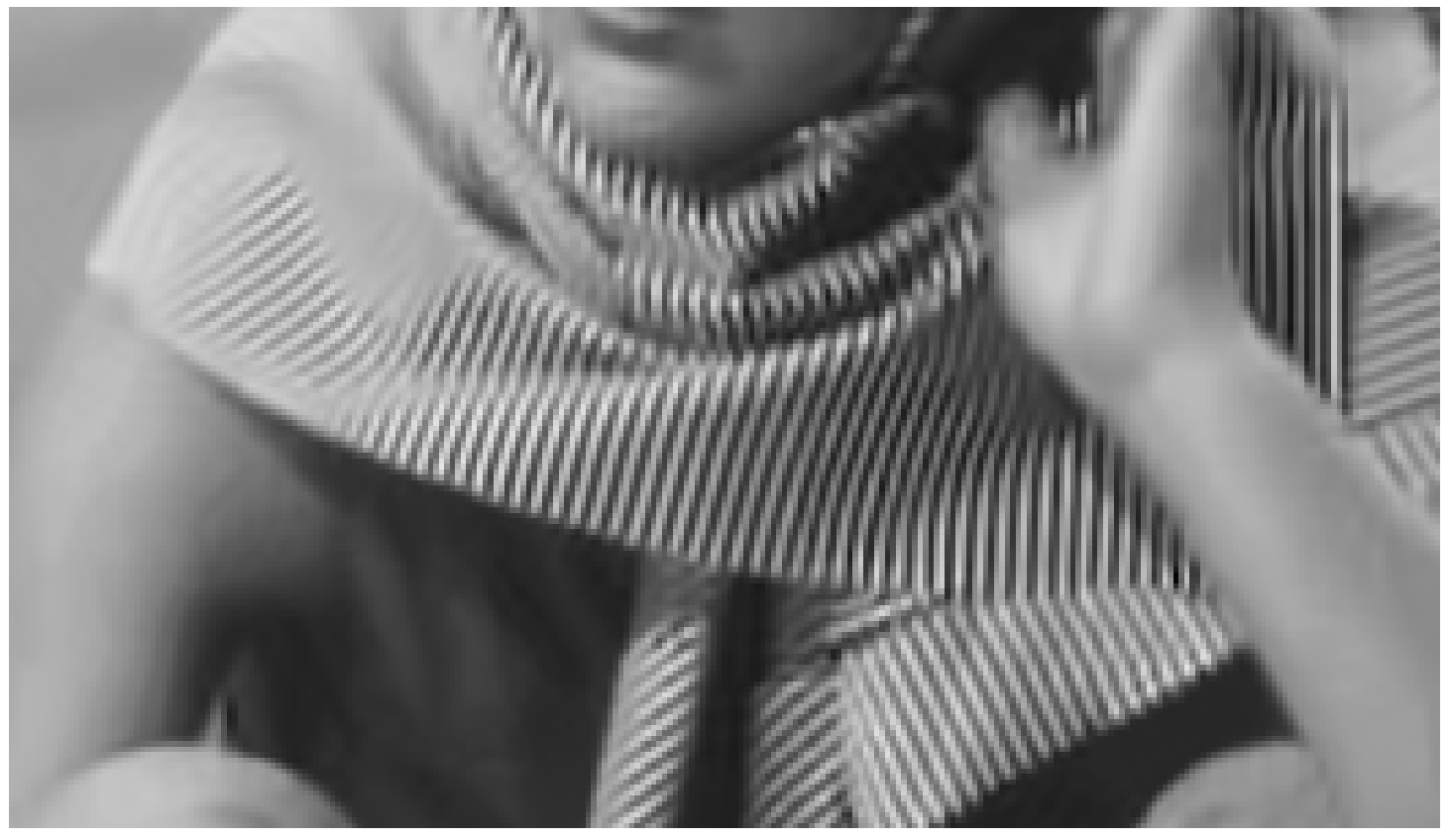}
    \centering\includegraphics[width=88pt]{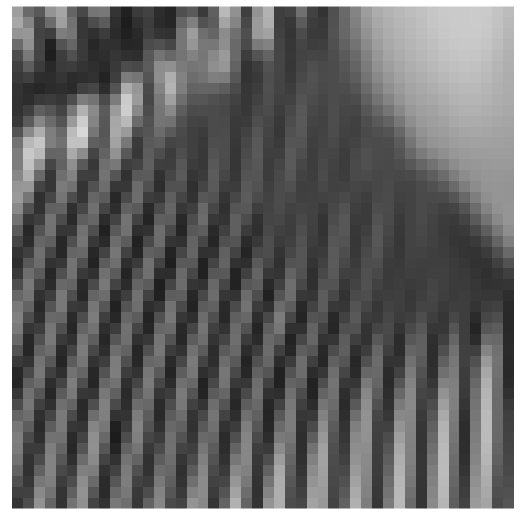}\\
    \caption{Auto-tuned IDBP-BM3D (26.94 dB)}
    \vspace{1mm}
  \end{subfigure}  
  \caption{Deblurring of {\em Barbara} image, Scenario 4. From top to bottom, fragments of: original image, blurred and noisy image, reconstruction of IDD-BM3D, reconstruction of P\&P-BM3D, reconstruction of the proposed IDBP-BM3D, and reconstruction of the proposed auto-tuned IDBP-BM3D.}
\label{deblurring_example}
\end{figure}

\begin{figure}
\captionsetup[subfigure]{labelformat=empty}
  \centering
  \begin{subfigure}[b]{1\linewidth}
    \centering\includegraphics[width=150pt]{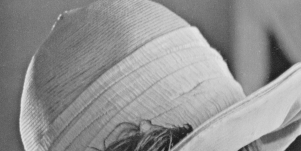}
    \centering\includegraphics[width=75.3pt]{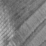}\\
    \caption{Original image}
    \vspace{1mm}
  \end{subfigure}
\\
  \begin{subfigure}[b]{1\linewidth}
    \centering\includegraphics[width=150pt]{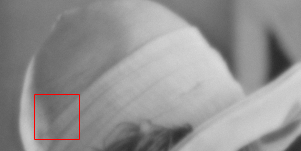}
    \centering\includegraphics[width=75.3pt]{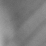}\\
    \caption{Blurred and noisy image}
    \vspace{1mm}
  \end{subfigure}
\\
  \begin{subfigure}[b]{1\linewidth}
    \centering\includegraphics[width=150pt]{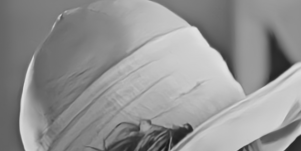}
    \centering\includegraphics[width=75.3pt]{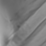}\\
    \caption{IRCNN (35.31 dB)}
    \vspace{1mm}
  \end{subfigure}
\\
  \begin{subfigure}[b]{1\linewidth}
    \centering\includegraphics[width=150pt]{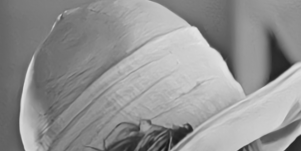}
    \centering\includegraphics[width=75.3pt]{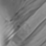}\\
    \caption{IDBP-CNN (35.49 dB)}
    \vspace{1mm}
  \end{subfigure}  
  \caption{Deblurring of {\em Lena} image, Scenario 1. From top to bottom, fragments of: original image, blurred and noisy image, reconstruction of IRCNN, and reconstruction of the proposed IDBP-CNN.}
\label{deblurring_example2}
\end{figure}

\begin{figure}
\captionsetup[subfigure]{labelformat=empty}
  \centering
  \begin{subfigure}[b]{1\linewidth}
    \centering\includegraphics[width=150pt]{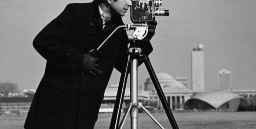}
    \centering\includegraphics[width=75.5pt]{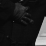}\\
    \caption{Original image}
    \vspace{1mm}
  \end{subfigure}
\\
  \begin{subfigure}[b]{1\linewidth}
    \centering\includegraphics[width=150pt]{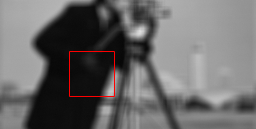}
    \centering\includegraphics[width=75.5pt]{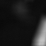}\\
    \caption{Blurred and noisy image}
    \vspace{1mm}
  \end{subfigure}
\\
  \begin{subfigure}[b]{1\linewidth}
    \centering\includegraphics[width=150pt]{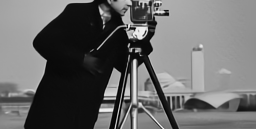}
    \centering\includegraphics[width=75.5pt]{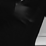}\\
    \caption{IRCNN (31.07 dB)}
    \vspace{1mm}
  \end{subfigure}
\\
  \begin{subfigure}[b]{1\linewidth}
    \centering\includegraphics[width=150pt]{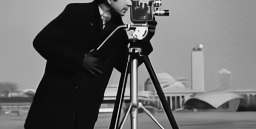}
    \centering\includegraphics[width=75.5pt]{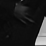}\\
    \caption{IDBP-CNN (31.32 dB)}
    \vspace{1mm}
  \end{subfigure}  
  \caption{Deblurring of {\em cameraman} image, Scenario 3. From top to bottom, fragments of: original image, blurred and noisy image, reconstruction of IRCNN, and reconstruction of the proposed IDBP-CNN.}
\label{deblurring_example3}
\end{figure}

\begin{table*}
\renewcommand{\arraystretch}{1.3}
\caption{Average deblurring results (PSNR in dB / SSIM) for for scenarios 1-4 on BSD68 dataset, and run-time (per image) on Intel i7-7500U CPU @ 2.70 GHz.} \label{table:bsd68_deblurring}
\centering
    \begin{tabular}{ | l | l | l | l | l | l | l |}
    \hline
            & Scenario 1 & Scenario 2 & Scenario 3 & Scenario 4 & Average & Time \\ \hline
    IDD-BM3D & 30.84 / 0.872 & 29.02 / 0.820 & 31.04 / {\bfseries 0.883} & 28.93 / 0.822 & 29.96 / 0.849 & 259s \\ \hline
    P\&P-BM3D & 30.41 / 0.865 & 28.53 / 0.806 & 30.78 / 0.880 & 28.61 / 0.814 & 29.58 / 0.841 & 85s \\ \hline
    IRCNN ($\sim$25 DNNs) & {\bfseries 31.17} / 0.877 & {\bfseries 29.31} / {\bfseries 0.832} & 30.84 / 0.865 & {\bfseries 29.16} / {\bfseries 0.830} & 30.12 / 0.851 & 34s \\ \hline
    IDBP-BM3D & 30.70 / 0.876 & 28.93 / 0.825 & 30.80 / {\bfseries 0.883} & 28.80 / 0.819 & 29.81 / 0.851 & 54s \\ \hline
    Auto-tuned IDBP-BM3D & 30.75 / 0.872 & 28.92 / 0.822 & 30.89 / 0.879 & 28.74 / 0.821 & 29.83 / 0.849 & 152s \\ \hline
    IDBP-CNN (1 DNN per scenario) & {\bfseries 31.17} / {\bfseries 0.882} & 29.19 / 0.830 & {\bfseries 31.12} / 0.878 & 29.13 / 0.828 & {\bfseries 30.15} / {\bfseries 0.855} & 35s \\ \hline
    Auto-tuned IDBP-CNN (1 DNN per scenario)  & 31.13 / 0.881 & 29.18 / 0.828 & 31.01 / 0.876 & 29.11 / 0.826 & 30.11 / 0.853 & 56s \\ \hline
    \end{tabular}
\end{table*}

\section{Conclusion}
\label{sec_conclusion}

In this work we introduced the Iterative Denoising and Backward Projections (IDBP) method for solving linear inverse problems using denoising algorithms. This method, in its general form, has only a single parameter that should be set according to a given condition.
We presented a mathematical analysis of this strategy and provided a practical way to tune its parameter. 
Therefore, it can be argued that our approach has less parameters that require tuning than the P\&P method. Specifically, for the noisy inpainting problem, the single parameter of the IDBP can be just set to zero, and for the deblurring problem our suggested automatic parameter tuning can be employed.
Experiments demonstrated that IDBP is competitive with state-of-the-art task-specific algorithms and with the P\&P approach for the inpainting and deblurring problems. 
It also achieves very promising results compared to IRCNN, while requiring significantly fewer denoising neural networks for solving an inverse problem, i.e. a single DNN instead of two dozen.

\appendices

\section{A Numerical Optimization Point of View on Problem (\ref{Eq_cost_func_our}) }
\label{app:insight}

A close look at problem (\ref{Eq_cost_func_our}) reveals that its $\tilde{\y}$ minimizer has a closed-form expression
\begin{align}
\label{Eq_ytilde_full_min}
\tilde{\y}^* = \H^\dagger \y + (\I_n - \H^\dagger \H ) \tilde{\x}.
\end{align}
Substituting (\ref{Eq_ytilde_full_min}) into (\ref{Eq_cost_func_our}), we have
\begin{align}
\label{Eq_cost_func_our_insight}
\minim{\tilde{\x}} \,\,\, \frac{1}{2(\sigma_e+\delta)^2} \| \H^\dagger \y - \H^\dagger \H \tilde{\x} \|_2^2 + s(\tilde{\x}).
\end{align}
While in the original problem (\ref{Eq_cost_func1}) the fidelity term measures the fitting of $\H\tilde{\x}$ to the measurements $\y = \H\x + \e$, in the new problem the fitting is done between $\P_H \tilde{\x}$ and $\H^\dagger \y = \P_H \x + \H^\dagger\e$, where $\P_H \triangleq \H^\dagger \H$ is the orthogonal projection onto the row space of $\H$. Assuming that $\H \in \Bbb R^{m \times n}$  ($m<n$) has full row rank, both operators $\H^T\H$ and $\P_H$ have rank $m$. However, though $\H^T\H$ may have very different eigenvalues, the eigenvalues of $\P_H$ can only be 1 in the row space of $\H$, and 0 in the null space of $\H$.

It is well known that many linear least squares optimization methods (e.g. conjugate gradients) perform better when the singular values of the linear operator are not spread over a wide range of values \cite{saad2003iterative}.
Therefore, if the prior $s(\tilde{\x})$ provides a strong restriction on $\Q_H \tilde{\x} \triangleq (\I_n - \P_H)\tilde{\x}$ given $\P_H \tilde{\x}$, then solving $c \| \H^\dagger \y - \P_H \tilde{\x} \|_2^2 + s(\tilde{\x})$ might be more stable than solving $c \| \y - \H \tilde{\x} \|_2^2 + s(\tilde{\x})$. Especially, assuming low noise level $\e \approx \0$, and recalling that good natural image priors are usually highly non-convex, a numerical optimization process (w.r.t. $\tilde{\x}$) for $c \| \P_H \x - \P_H \tilde{\x} \|_2^2 + s(\tilde{\x}) = c (\x-\tilde{\x})^T \P_H (\x-\tilde{\x}) + s(\tilde{\x})$ may end up with $\tilde{\x}$ closer to $\x$ than a numerical optimization process for $c \| \H \x - \H \tilde{\x} \|_2^2 + s(\tilde{\x}) = c (\x-\tilde{\x})^T \H^T\H (\x-\tilde{\x}) + s(\tilde{\x})$.

Despite having the above insight on the optimization problem (\ref{Eq_cost_func_our}), in order to get an efficient solver with a plug-and-play property for the prior $s(\x)$, we use alternating minimization for $\tilde{\x}$ and $\tilde{\y}$, instead of directly solving the problem for $\tilde{\y}$. 
We leave the rigorous study of the above numerical optimization direction for future research.

\section{Proof of Theorem \ref{theorem2}}
\label{app:thm}

We start with proving an auxiliary lemma.

\begin{lemma}
\label{lemma_aux}
Assuming that Condition \ref{cond1} holds, i.e. $\| \mathcal{D}(\z;\sigma) - \z \|_2 \leq \sigma B$ for any $\z$, we have
\begin{align}
\label{Eq_lemma}
\| \mathcal{D}(\z_1;\sigma) - \mathcal{D}(\z_2;\sigma) \|_2 \leq \| \z_1 - \z_2 \|_2 + 2\sigma B
\end{align}
for any $\z_1$ and $\z_2$ in $\Bbb R^n$.
\end{lemma}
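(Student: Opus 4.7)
The plan is to reduce this to a direct application of the triangle inequality combined with the boundedness Condition~\ref{cond1}. The key observation is that we can insert $\z_1 - \z_1$ and $\z_2 - \z_2$ inside the norm on the left-hand side, splitting the difference $\mathcal{D}(\z_1;\sigma) - \mathcal{D}(\z_2;\sigma)$ into three pieces: $(\mathcal{D}(\z_1;\sigma) - \z_1)$, $(\z_1 - \z_2)$, and $(\z_2 - \mathcal{D}(\z_2;\sigma))$.

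First I would write
\begin{align*}
\| \mathcal{D}(\z_1;\sigma) - \mathcal{D}(\z_2;\sigma) \|_2 &= \| (\mathcal{D}(\z_1;\sigma) - \z_1) + (\z_1 - \z_2) + (\z_2 - \mathcal{D}(\z_2;\sigma)) \|_2,
\end{align*}
then apply the triangle inequality to get an upper bound of the sum of the three norms. Next I would invoke Condition~\ref{cond1} twice, once with $\z=\z_1$ and once with $\z=\z_2$, each giving an $\sigma B$ bound on the denoiser-to-input residual. Adding the three bounds yields exactly $\| \z_1 - \z_2 \|_2 + 2\sigma B$, as required.

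There is no real obstacle here: the lemma is a trivial consequence of the triangle inequality together with the fact that $\mathcal{D}(\cdot;\sigma)$ moves its input by at most $\sigma B$. The lemma's importance is that it plays the role of a relaxed, additively perturbed ``non-expansiveness'' property, which will be combined with the null-space contraction assumption (Condition~\ref{cond3}) in the proof of Theorem~\ref{theorem2}; the real work of that proof lies elsewhere, and this lemma just packages the boundedness assumption into a convenient form.
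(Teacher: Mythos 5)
Your proof is correct and matches the paper's argument exactly: the same three-term decomposition via inserting $\z_1$ and $\z_2$, the triangle inequality, and two applications of Condition~\ref{cond1}. Nothing further is needed.
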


\begin{proof}
Using the triangle inequality followed by Condition \ref{cond1}, we get the desired result
\begin{align}
\label{Eq_lemma_aux}
&\| \mathcal{D}(\z_1;\sigma) - \mathcal{D}(\z_2;\sigma) \|_2  \nonumber \\
& \,\,\,\, \leq \| \mathcal{D}(\z_1;\sigma) - \z_1 \|_2 + \| \mathcal{D}(\z_2;\sigma) - \z_2 \|_2 + \| \z_1 - \z_2 \|_2 \nonumber \\
& \,\,\,\, \leq \| \z_1 - \z_2 \|_2 + 2\sigma B.
\end{align}
\end{proof}

We now turn to the proof of the theorem.

\begin{proof}
By $\tilde{\x}_{k+1} = \mathcal{D}(\tilde{\y}_k;\sigma)$, and using the triangle inequality, we have
\begin{align}
\label{Eq_theorem2_main}
& \| \tilde{\x}_{k+1} - \x \|_2  \nonumber \\
& \,\,\,\, = \| \mathcal{D}(\tilde{\y}_k;\sigma) - \x \|_2 \nonumber \\
& \,\,\,\, \leq \| \mathcal{D}(\tilde{\y}_k;\sigma) - \mathcal{D}(\overline{\y};\sigma) \|_2 + \| \mathcal{D}(\overline{\y};\sigma) - \mathcal{D}(\x;\sigma) \|_2  \nonumber \\
& \,\,\,\,\,\,\,\,\,\,\,\, + \| \mathcal{D}(\x;\sigma) - \x \|_2 \nonumber \\
& \,\,\,\, \leq \| \tilde{\y}_k - \overline{\y} \|_2 + \| \overline{\y} - \x \|_2 + 5\sigma B \nonumber \\
& \,\,\,\, = \| \tilde{\y}_k - \overline{\y} \|_2 + \| \H^\dagger \e \|_2 + 5\sigma B,
\end{align}
where the second inequality uses Lemma \ref{lemma_aux} twice, and Condition \ref{cond1} for the last term, and the last equality follows from the fact that $\overline{\y} = \x + \H^\dagger \e$.
We turn to bound the first term in the right-hand side of (\ref{Eq_theorem2_main}). Because $\tilde{\y}_k = \H^\dagger\y + \Q_H \tilde{\x}_k$ and $\y = \H\x+\e$, we have
\begin{align}
\label{Eq_theorem2_main2}
& \| \tilde{\y}_k - \overline{\y} \|_2  \nonumber \\
& \,\,\,\, = \| (\H^\dagger\H\x + \H^\dagger \e + \Q_H \tilde{\x}_k) - (\x + \H^\dagger \e) \|_2 \nonumber \\
& \,\,\,\, = \| \Q_H (\tilde{\x}_k - \x) \|_2 \nonumber \\
& \,\,\,\, = \| \Q_H (\mathcal{D}(\tilde{\y}_{k-1};\sigma) - \x) \|_2 \nonumber \\
& \,\,\,\, \leq \| \Q_H (\mathcal{D}(\tilde{\y}_{k-1};\sigma) - \mathcal{D}(\overline{\y};\sigma)) \|_2  \nonumber \\
& \,\,\,\,\,\,\,\,\,\,\,\, + \| \Q_H (\mathcal{D}(\overline{\y};\sigma) - \mathcal{D}(\x;\sigma)) \|_2 +  \| \Q_H (\mathcal{D}(\x;\sigma) - \x) \|_2 \nonumber \\
& \,\,\,\, \leq K_\sigma \| \tilde{\y}_{k-1} - \overline{\y} \|_2 + K_\sigma \| \overline{\y} - \x \|_2 + \| \mathcal{D}(\x;\sigma) - \x \|_2 \nonumber \\
& \,\,\,\, \leq K_\sigma \| \tilde{\y}_{k-1} - \overline{\y} \|_2 + K_\sigma \| \H^\dagger \e \|_2 + \sigma B,
\end{align}
where the first inequality follows from the triangle inequality; the second inequality uses Condition \ref{cond3}, i.e. $\Q_H \mathcal{D}(\cdot;\sigma)$ is a contraction, for the first two terms, and $\|\Q_H\z\|_2 \leq \|\z\|_2$ for the last term; and the last inequality uses $\overline{\y} = \x + \H^\dagger \e$ and Condition \ref{cond1}.
Using recursion (recall that $K_\sigma<1$) we have
\begin{align}
\label{Eq_theorem2_main3}
\| \tilde{\y}_k - \overline{\y} \|_2 
& \leq K_\sigma^k \| \tilde{\y}_0 - \overline{\y} \|_2 + \frac{1-K_\sigma^k}{1-K_\sigma} ( K_\sigma \| \H^\dagger \e \|_2 + \sigma B )  \nonumber \\
& \leq K_\sigma^k \| \tilde{\y}_0 - \overline{\y} \|_2 + \frac{1}{1-K_\sigma} ( K_\sigma \| \H^\dagger \e \|_2 + \sigma B ).
\end{align}
Finally, substituting (\ref{Eq_theorem2_main3}) in (\ref{Eq_theorem2_main}) leads to (\ref{Eq_theorem2}).
\end{proof}

\section*{Acknowledgment}

The authors would like to thank Amir Beck for fruitful discussion, and the unknown reviewers for their important remarks that helped to improve the shape of the paper.
This work was supported by the European research council (ERC StG 757497 PI Giryes).

\bibliographystyle{ieeetr}

\bibliography{paper_bibliography}

\end{document}